\newtheorem{theorem}{Theorem}
\newtheorem{prop}{Proposition}
\newtheorem{cor}{Corollary}
\def\A{{\mathcal A}}
\def\D{{\mathbb D}}
\def\g{{\bf g}}
\def\I{{\bf I}}
\def\u{{\bf u}}
\def\w{{\bf w}}
\def\W{{W_{\mathrm{lam}}}}
\def\x{{\bf x}}
\def\z{{\bf z}}
\def\H{{\mathcal H}}
\def\X{{\mathcal X}}
\def\Y{{\mathcal Y}}
\def\btheta{\pmb \theta}
\def\R{\mathbb{R}}
\def\bdelta{\pmb \delta}
\def\Argmin{\ensuremath{{\mathrm{Argmin}}}}
\title{The Teaching Dimension of Linear Learners}
\author{Ji Liu$^\dag$ and Xiaojin Zhu$^\ddag$\\
\textit{jliu@cs.rochester.edu, jerryzhu@cs.wisc.edu}\\
$^\dag$Department of Computer Science, University of Rochester \\
$^\ddag$Department of Computer Sciences, University of Wisconsin-Madison
}
\date{\today}
\begin{document}
\maketitle
\thispagestyle{plain} \pagestyle{plain}

\begin{abstract}
Teaching dimension is a learning theoretic quantity that specifies the minimum training set size to teach a target model to a learner. Previous studies on teaching dimension focused on version-space learners which maintain all hypotheses consistent with the training data, and cannot be applied to modern machine learners which select a specific hypothesis via optimization. This paper presents the first known teaching dimension for ridge regression, support vector machines, and logistic regression.  We also exhibit optimal training sets that match these teaching dimensions. 
Our approach generalizes to other linear learners.

\end{abstract}

\section{Introduction}
\label{sec:intro}

Consider a teacher who knows both a target model and the learning algorithm used by a machine learner.
The teacher wants to teach the target model to the learner by \emph{constructing} a training set. 
The training set does not need to contain independent and identically distributed items drawn from some distribution.
Furthermore, the teacher can construct any item in the input space.
How many training items are needed?
This is the question addressed by the \emph{teaching dimension}~\cite{Goldman1995Complexity,Shinohara1991Teachability}. 
We give the precise definition in section~\ref{sec:classicTD}, but first illustrate the intuition with an example.

Consider integers $x \in \{ 1 \ldots 10 \}$ and threshold classifiers $h_\theta$ on them, so that $h_\theta(x)$ returns -1 if $x < \theta$ and 1 if $x \ge \theta$.
Now let the hypothesis space $\H$ consist of eleven classifiers $\H=\{h_\theta \mid \theta \in \{1 \ldots 11\}\}$. 
Let the learner be a version-space learner, namely it maintains a version space $\{h_\theta \in \H \mid h_\theta \mbox{ consistent with the training set}\}$.
If we want to teach a target model (in this paper we use hypothesis and model exchangeably), say $h_9$, to such a learner,
we can construct a training set that results in a singleton version space $\{h_9\}$.
It is easy to see that the training set $D=\{(x_1=8,y_1=-1), (x_2=9,y_2=1)\}$ is the smallest set for this purpose. 
We say that the teaching dimension of $h_9$ with respect to $\H$ is $TD(h_9) = |D|=2$.
Similarly, $TD(h_{11})=1$ because $D=\{(x_1=10, y_1=-1)\}$ suffices. In fact, $TD(h_\theta^*)=1$ for target model $\theta^*=1$ or 11, and 2 for $\theta^*=2,3,\ldots,10$.

The astute reader may notice that this example does not apply to continuous spaces.  To see this, let us extend $x \in \R$ and $\H=\{h_\theta \mid \theta \in \R\}$. The learner's version space under any linearly separable training set would now be represented by the interval between the two closest oppositely labeled items. 
It is impossible for the version-space learner to pick out a unique target model $h_{\theta^*}$ with a finite training set. In other words, $TD(h_{\theta^*})=\infty$ for all target models $\theta^*$. This is counterintuitive because ostensibly we can teach any one of the ``modern'' machine learning algorithms such as a support vector machine (SVM) with only two training items: $D=\{ (x_1=\theta^*-\epsilon, y_1=-1), (x_2=\theta^*+\epsilon, y_2=1) \}$ with any $\epsilon > 0$.

The issue here is that a version-space learner is not equipped with the ability to pick the max-margin (or any other specific) hypothesis from the version space. In contrast, an SVM is \emph{not} a version-space learner in our terminology; 
we have stronger knowledge from optimization on how it picks a specific hypothesis from the hypothesis space. 
This paper will utilize such knowledge to derive teaching dimensions that are distinct from classic teaching dimension analysis (e.g.~\cite{JMLR:v15:doliwa14a}). 
Specifically,
we extend teaching dimension to linear learners that learn by regularized empirical risk minimization:
\begin{equation}
\A_{opt}(D) := \Argmin_{\btheta \in \R^d} \quad \underbrace{\sum_{i=1}^n \ell(\x_i^\top {\btheta}, y_i) + {\lambda \over 2} \| \btheta \|_A^2}_{=:f(\btheta)}.
\label{eq:main}
\end{equation}
Here, we identify $\H$ with $\R^d$, $h$ with $\btheta$, 
the loss function $\ell$ is either smooth or convex in the first argument, 
$\lambda>0$ is the regularization coefficient, and $A$ is a positive semidefinite matrix.
$\|\cdot \|_A$ is the Mahalanobis norm: $\|\btheta\|_A:= \sqrt{\btheta^\top A \btheta}$.
We follow the convention in optimization when we use the capitalized {$\Argmin$} to emphasize that it returns a \emph{set} that achieves the minimum.
The teacher can construct a training set with any items in $\R^d$. 
The alternative pool-based teaching setting, where the teacher is given a finite pool of candidate training items and must select items from that pool,  is not studied in this paper.
By linear learners we mean the input $\x$ and the parameter $\btheta$ interact only via their inner product $\x^\top \btheta$.
Linear learners include SVMs, logistic regression, and linear regression.
Our analysis technique involves a novel application of the Karush-Kuhn-Tucker (KKT) conditions.

To our knowledge, this paper gives the first known values of teaching dimension for ridge regression, SVM, and logistic regression. 
We summarize our main results in Table~\ref{tab:main}. 
The table separately lists homogeneous (without a bias term) and inhomogeneous (with a bias term) versions of the linear learners.
The teaching goal refers to the intention of the teacher: is teaching considered successful only when the learner learns the exact target parameter, or when the learner learns the correct decision boundary (which can be achieved by any positive scaling of the target parameter).
See section~\ref{sec:main} for definition of the target parameters $\btheta^*, \w^*$ and the constant $\tau_{\mathrm{max}}$. 
The target parameters are assumed to be nonzero. 
We will also present the corresponding minimum teaching set construction in section~\ref{sec:main}.

\begin{table}[ht]
\begin{center}
\begin{tabular}{|c||c|c|c||c|c|c|}
\hline
& \multicolumn{3}{c||}{homogeneous} & \multicolumn{3}{c|}{inhomogeneous} \\ \cline{2-7}
& ridge & SVM & logistic & ridge & SVM & logistic \\
\hline
exact parameter 
	& 1 
	& $\left\lceil \lambda \|\btheta^*\|^2 \right\rceil$
	& $\left\lceil {\lambda \|\btheta^*\|^2 \over \tau_{\max}} \right\rceil$
	& 2 
	& $2 \left\lceil {{\lambda \|\w^*\|^2} \over 2} \right\rceil ^\dagger$
	& $2 \left\lceil { {\lambda \|\w^*\|^2} \over {2\tau_{\max}} } \right\rceil ^\dagger$ \\
\hline
decision boundary
	& -
	& 1
	& 1
	& -
	& 2
	& 2  \\
\hline
\end{tabular}
\end{center}
\caption{The teaching dimension of ridge regression, SVM, and logistic regression.  ($\dagger$: up to rounding effect, see section~\ref{sec:UBinhomogeneous}).}
\label{tab:main}
\end{table}

\section{Classic Teaching Dimension and its Limitations}
\label{sec:classicTD}

Let $\X$ denote the input space and $\Y \subseteq \R$ the output space.
A hypothesis is a function {$h: \X \rightarrow \Y$}.
In this paper we identify a hypothesis $h_{\btheta}$ with its model parameter $\btheta$.
The hypothesis space $\H$ is a set of hypotheses.
By training item we mean a pair $(x,y) \in \X \times \Y$.
A training set is a multiset $D=\{(\x_1, y_1) \ldots (\x_n, y_n)\}$ where repeated items are allowed.
Importantly, for the purpose of teaching we do \emph{not} assume that $D$ be drawn $i.i.d.$ from a distribution.
Let $\D$ denote the set of all training sets of all sizes.
A learning algorithm $\A: \D \rightarrow 2^\H$ takes in a training set $D \in \D$ and outputs a subset of the hypothesis space $\H$.
That is, $\A$ does not necessarily return a unique hypothesis.

Classic teaching dimension analysis is restricted to the version-space learner $\A_{vs}$:
\begin{equation}
\A_{vs}(D) = \{h \in \H \mid \mbox{ $h$ is consistent with $D$ } \}.
\label{eq:versionspace}
\end{equation}
That is, the learner $\A_{vs}$ keeps track of the version space.
Let the target model be $h_{\btheta^*} \in \H$.
Teaching is successful if the teacher identifies a training set $D \in \D$ such that {$\A_{vs}(D)=\{h_{\btheta^*}\}$} the singleton set.
Such a $D$ is called a \textbf{teaching set} of $h_{\btheta^*}$ with respect to $\H$.
The teaching dimension of the hypothesis $h_{\btheta^*}$ is the minimum size of the teaching set:
\begin{equation}
TD(h_{\btheta^*}) = \left\{
\begin{array}{cl}
\min_{D \in \D}  |D|, & \mbox{for $D$ a teaching set of $h_{\btheta^*}$} \\
\infty, & \mbox{if no teaching set exists}
\end{array}
\right.
\end{equation}
Furthermore, the teaching dimension of the whole hypothesis space $\H$ is defined by the hardest hypothesis: $TD(\H) = \max_{h \in \H} TD(h)$.
In this paper we will focus on the fine-grained teaching dimension of individual hypothesis $TD(h)$.

Classic teaching dimension analysis has several limitations:
the learner is assumed to be a version-space learner $\A_{vs}$,
and the hypothesis space is typically finite or countably infinite.
As the example in section~\ref{sec:intro} showed, these fail to capture the teaching dimension of ``modern'' machine learners which has $\R^d$ as input space and picks a unique hypothesis via regularized empirical risk minimization~\eqref{eq:main}.
Furthermore, the target model can be ambiguous when the learner is a classifier: should the learner learn the exact target parameter $\btheta^*$, or the target decision boundary?
In linear models any scaled parameter $c \btheta^*$ with $c>0$ produces the same target decision boundary.
These limitations motivate us to generalize the teaching dimension in the next section.

\section{Main Results}
\label{sec:main}

To make teaching dimension's dependency on the learning algorithm explicit, henceforth we write teaching dimension with two arguments as
\begin{equation}
TD(h^*, \A)
\end{equation}
where $h^* \in \H$ is the target model, and $\A: \D \rightarrow 2^\H$ is the learning algorithm which given a training set $D \in \D$ returns a set of hypotheses $A(D)$.
We define teaching dimension to be the size of the smallest training set $D$ such that $A(D) = \{h^*\}$, the singleton set containing the target model.
With this notation, the classic teaching dimension is $TD(h^*, \A_{vs})$ where $\A_{vs}$ is the version space learning algorithm~\eqref{eq:versionspace}.
In this paper we focus on $\A_{opt}$ in~\eqref{eq:main} instead, namely linear learners in $\R^d$.
Linear learners include many popular members such as both homogeneous (without a bias term) and inhomogeneous (with a bias term) versions of linear regression, SVM, and logistic regression.
In addition, the linear interaction between $\x$ and $\btheta$ makes the loss function subgradient easy to compute, though in principle our analysis technique is applicable to other optimization-based learners, too.
In this section our goal is to teach the exact parameter $\btheta^*$, consequently our teaching dimension of interest is 
\begin{equation}
TD(\btheta^*, \A_{opt}).
\end{equation}
Later in section~\ref{sec:Gdb} for classification we will teach the decision boundary instead.

How to reason about our teaching dimension $TD(\btheta^*, \A_{opt})$?
It is the size of the \emph{smallest} training set $D$ with which~\eqref{eq:main} has a unique solution $\btheta^*$.
Our strategy is to first establish a number of lower bounds $LB \le TD(\btheta^*, \A_{opt})$ by showing that any training set with which~\eqref{eq:main} has a unique solution $\btheta^*$ must have at least $LB$ items. 
Section~\ref{sec:LB} is devoted to such lower bounds. 
The actual teaching dimension is learner dependent.
In sections~\ref{sec:UBhomogeneous} 
and \ref{sec:UBinhomogeneous} 
we construct specific teaching sets for three popular learners: ridge regression, SVM, and logistic regression.
These teaching sets uniquely returns $\btheta^*$ via~\eqref{eq:main}.
By definition, the size of these teaching sets is an upper bound on $TD(\btheta^*, \A_{opt})$, respectively.
If the lower and upper bounds match, we would have identified the teaching dimension $TD(\btheta^*, \A_{opt})$.


\subsection{Lower Bounds on Teaching Dimension $TD(\btheta^*, \A_{opt})$}
\label{sec:LB}
In this section we provide three general lower bounds on the teaching dimension.
These lower bounds capture different aspects of a teaching set, and should be used in conjunction (i.e. taking the maximum) when applicable.
We will instantiate these lower bounds for specific learners in section~\ref{sec:UBhomogeneous}.
In the following let $\mathcal{X}$ and $\mathcal{Y}$ be the feasible region of all $\x_i$'s and $y_i$'s respectively. 
We will use the notation $\partial_1 \ell(\cdot, \cdot)$ in the following way: if $\ell(\cdot, \cdot)$ is smooth, then it denotes a singleton set only containing the gradient w.r.t. the first argument; if $\ell(\cdot, \cdot)$ is convex, then it denotes the subdifferential w.r.t the first argument. 

LB1 comes from a degree-of-freedom perspective.  It is necessary to have this amount of training items for a unique solution to exist in~\eqref{eq:main}.

\begin{theorem}
\label{thm:LB1}
Given any target model $\btheta^*$, there is a degree-of-freedom lower bound on the number of training items to obtain a unique solution $\btheta^*$ from solving \eqref{eq:main}: 
\begin{equation}
\label{eq:thm:1}
LB1 =
\begin{cases}
    d - \text{Rank}(A) + 1,& \text{if}~A\btheta^* \neq {\bf 0}\\
    d - \text{Rank}(A),              & \text{otherwise}.
\end{cases}
\end{equation}
\end{theorem}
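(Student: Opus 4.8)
The plan is to characterize \emph{necessarily} when $\btheta^*$ can be the unique minimizer of the convex objective $f$ in~\eqref{eq:main}, via two conditions, and then count how many data items each forces. Let $X \in \R^{n \times d}$ be the data matrix whose $i$-th row is $\x_i^\top$, so that $\text{span}\{\x_1,\dots,\x_n\} = \text{Range}(X^\top)$ and $\text{Null}(X) = \{\u : \x_i^\top \u = 0 \text{ for all } i\}$; write $r := \text{Rank}(A)$. Since a teaching set requires $\A_{opt}(D) = \{\btheta^*\}$, the vector $\btheta^*$ must be both a minimizer (stationarity) and the \emph{only} one (no flat direction), and I will extract a rank constraint from each.

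First I would record the first-order optimality condition. Because $f$ is convex and finite everywhere, $\btheta^* \in \A_{opt}(D)$ iff ${\bf 0} \in \partial f(\btheta^*)$; by the subdifferential sum rule and the chain rule through the linear map $\btheta \mapsto \x_i^\top\btheta$, this says there exist $g_i \in \partial_1 \ell(\x_i^\top \btheta^*, y_i)$ with
\[
\lambda A \btheta^* = -\sum_{i=1}^n g_i \x_i \in \text{Range}(X^\top).
\]
Call this condition (R2): the vector $A\btheta^*$ lies in the span of the data. Next I would isolate the flat-direction obstruction. If some nonzero $\u$ satisfies $X\u = {\bf 0}$ and $A\u = {\bf 0}$, then moving along $\u$ leaves every $\x_i^\top\btheta$ and the regularizer $\tfrac{\lambda}{2}\btheta^\top A\btheta$ unchanged, so $f(\btheta^* + t\u) = f(\btheta^*)$ for all $t$ and $\btheta^*$ is not the unique minimizer. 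Hence uniqueness requires (R1): $\text{Null}(X) \cap \text{Null}(A) = \{{\bf 0}\}$. The standard inequality $\dim\text{Null}(X) + \dim\text{Null}(A) - d \le \dim\big(\text{Null}(X)\cap\text{Null}(A)\big) = 0$ gives $\text{Rank}(X) \ge d - r$, and since $X$ has $n$ rows, $n \ge d - r$. This settles the ``otherwise'' branch.

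The crux, which I expect to be the main obstacle, is the extra $+1$ when $A\btheta^* \neq {\bf 0}$. Here I would argue by contradiction: assume (R1) holds yet $\text{Rank}(X) = d - r$. Then $\dim\text{Null}(X) + \dim\text{Null}(A) = r + (d-r) = d$ combined with trivial intersection gives the direct sum $\text{Null}(X) \oplus \text{Null}(A) = \R^d$. Taking orthogonal complements, using $A = A^\top$ so that $\text{Null}(A)^\perp = \text{Range}(A)$ and $\text{Null}(X)^\perp = \text{Range}(X^\top)$, together with the identity $(U \cap W)^\perp = U^\perp + W^\perp$, yields $\text{Range}(X^\top) + \text{Range}(A) = \R^d$. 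Counting dimensions, $\dim\big(\text{Range}(X^\top)\cap\text{Range}(A)\big) = (d-r) + r - d = 0$. But (R2) forces $A\btheta^* \in \text{Range}(X^\top)$ while trivially $A\btheta^* \in \text{Range}(A)$, so $A\btheta^* \in \text{Range}(X^\top)\cap\text{Range}(A) = \{{\bf 0}\}$, contradicting $A\btheta^* \neq {\bf 0}$. Therefore $\text{Rank}(X) \ge d - r + 1$ and $n \ge d - \text{Rank}(A) + 1$.

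The delicate points to get right are the validity of the subdifferential sum and chain rules for the possibly nonsmooth loss (guaranteed here since the regularizer is smooth and all domains are $\R^d$), and the bookkeeping in the complement identities that converts the stationarity condition (R2) into the contradiction; the rest is routine dimension counting.
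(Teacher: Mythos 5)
Your proof is correct and follows the same skeleton as the paper's: necessity of the stationarity (KKT) condition, a flat-direction argument showing uniqueness forces $\text{Null}(X)\cap\text{Null}(A)=\{{\bf 0}\}$, and a dimension count that gains $+1$ from the fact that the nonzero vector $A\btheta^*$ lies in both $\text{Range}(A)$ and the span of the data. The two places you diverge are cosmetic but worth noting. First, your justification of the flat-direction step is actually cleaner than the paper's: you observe that $X\u={\bf 0}$ and $A\u={\bf 0}$ leave every term of $f$ literally unchanged, so $f(\btheta^*+t\u)=f(\btheta^*)$ for all $t$ with no appeal to optimality conditions at all; the paper instead shows that $\btheta^*+\bdelta$ still satisfies the KKT condition and then needs a two-case analysis (convexity for sufficiency, or a Taylor expansion for smooth nonconvex losses) to conclude it is still a minimizer. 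Your version handles both cases at once. Second, your final step argues by contradiction through orthogonal complements ($\text{Range}(X^\top)\cap\text{Range}(A)=\{{\bf 0}\}$ would force $A\btheta^*={\bf 0}$), whereas the paper applies inclusion--exclusion on dimensions directly to $\text{Span}\{A_{\cdot 1},\dots,A_{\cdot d},\x_1,\dots,\x_{n^*}\}$; these are the same computation. One small caution: you invoke ``$\btheta^*\in\A_{opt}(D)$ iff ${\bf 0}\in\partial f(\btheta^*)$,'' which requires convexity, while the paper also permits smooth nonconvex losses; since you only ever use the ``only if'' direction (necessity of stationarity at a minimizer), which holds for smooth losses too, this does not create a gap, but you should state it as a necessary condition rather than an equivalence.
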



\begin{proof}
Let $n^*$ be the minimal number of training items to ensure a unique solution $\btheta^*$. First consider the case $n^*=0$. It happens if and only if $\btheta^* = {\bf 0}$ and $\text{Rank}(A) = d$, which is a special case of $A\btheta^* = {\bf 0}$. Clearly, this case is consistent with LB1.
Next consider the case $n^* \geq 1$.   
Since $\btheta^*$ solves \eqref{eq:main}, the KKT condition holds: 
\begin{align}
-\lambda A\btheta^* \in \sum_{i=1}^{n^*}\partial_1 \ell(\x_i^\top \btheta^*, y_i) \x_i.
\label{eq:kkt}
\end{align} 
We seek all $\bdelta$ such that $\btheta^* + \bdelta$ satisfies
\begin{equation}
A (\btheta^*+\bdelta) = A \btheta^* \quad \text{and} \quad \x_i^{\top} (\btheta^* + \bdelta) = \x_i^\top \btheta^* \quad \forall i=1,\cdots, n^*,
\label{eq:cond}
\end{equation}
For any such $\bdelta$, simple algebra verifies that $\btheta^*+ t\bdelta$ satisfies the KKT condition~\eqref{eq:kkt} for any $t\in [0, 1]$. 
Consequently, $\btheta^* + \bdelta$ also solves the problem in \eqref{eq:main}. To see this, we consider two situations: 
\begin{itemize}
\item If the loss function $\ell(\cdot, \cdot)$ is convex in the first argument, the KKT condition is a sufficient optimality condition, which means that $\btheta^*+ \bdelta$ solves \eqref{eq:main}. 
\item If the loss function $\ell(\cdot, \cdot)$ is smooth (not necessary convex) in the first argument, we have $f(\btheta^*) = f(\btheta^* + \bdelta)$ by using the Taylor expansion (recall $f$ is defined in equation~\ref{eq:main}):
\begin{align*}
f(\btheta^* + \bdelta) = & f(\btheta^*) + \langle \nabla f(\btheta^* + t\bdelta), \bdelta \rangle \quad (\text{for some $t \in [0,1] $})
\\ = & 
f(\btheta^*) + \left\langle \sum_{i=1}^{n^*} \nabla_1 \ell(\x_i^\top (\btheta^*+ t\bdelta),~y_i)\x_i + \lambda A(\btheta^* + t\bdelta)),~\bdelta\right\rangle 
\\ = & 
f(\btheta^*) + \left\langle \underbrace{ \sum_{i=1}^{n^*} \nabla_1 \ell(\x_i^\top \btheta^*,~y_i)\x_i + \lambda A\btheta^*}_{\text{={\bf 0} due to the KKT condition \eqref{eq:kkt}}},~\bdelta\right\rangle  
\\ = &
f(\btheta^*).
\end{align*}  
\end{itemize}
Therefore, $\btheta^* + \bdelta$ also solves \eqref{eq:main}. However, the uniqueness of $\btheta^*$ requires $\bdelta = {\bf 0}$ to be the only value satisfying \eqref{eq:cond}. This is equivalent to say 
\begin{align}
\text{Null}(A) \cap \text{Null}(\text{Span}\{\x_1, \cdots, \x_{n^*}\}) = \{{\bf 0}\}. 
\label{eq:cond2}
\end{align}
It indicates that
\[
\text{Rank}(A) + \text{Dim}(\text{Span}\{\x_1, \cdots, \x_{n^*}\}) \geq d.
\]
From $n^*\geq \text{Dim}(\text{span}\{\x_1, \cdots, \x_{n^*}\})$, we have $n^* \geq d - \text{Rank}(A)$. 
We proved the general case for LB1. 

If we have $A\btheta^* \neq {\bf 0}$, we can further improve LB1. 
Let $\g^*=(g^*_1, \ldots, g^*_{n^*})^\top$ be the vector satisfying 
\begin{align}
-\lambda A \btheta^* = \sum_{i=1}^{n^*} g_i^* \x_i  \quad \text{and}  \quad g^*_i \in \partial_1 \ell(\x_i^\top \btheta^*, y_i) \quad \forall i=1, 2, \cdots, n^*.
\label{eq:cond0}
\end{align}
Since $\btheta^*$ satisfies the KKT condition, such vector $\g^*$ must exist. 
Applying  $A\btheta^* \neq {\bf 0}$ to \eqref{eq:cond0}, we have $\g^* \neq {\bf 0}$ and 
\begin{equation}
\text{Dim}\left(\text{Span}\{A_{.1}, A_{.2}, \cdots, A_{.d}\} \cap \text{Span}\{\x_1, \cdots, \x_{n^*} \}\right) \geq 1.
\label{eq:cond3}
\end{equation}
To satisfy \eqref{eq:cond2}, we must have
\[
d = \text{Dim}\left(\text{Span}\{A_{.1}, A_{.2}, \cdots, A_{.d}, \x_1, \cdots, \x_{n^*} \}\right).
\]
Using the fact in linear algebra
\begin{align*}
\text{Dim}\left(\text{Span}\{A_{.1}, A_{.2}, \cdots, A_{.d}, \x_1, \cdots, \x_{n^*} \}\right) = & \underbrace{\text{Dim}\left(\text{Span}\{A_{.1}, A_{.2}, \cdots, A_{.d}\}\right)}_{=\text{Rank}(A)} + 
\\ &
\underbrace{\text{Dim}\left(\text{Span}\{\x_1, \cdots, \x_{n^*} \}\right)}_{\leq n^*} - 
\\ &
\underbrace{\text{Dim}\left(\text{Span}\{A_{.1}, A_{.2}, \cdots, A_{.d}\} \cap \text{Span}\{\x_1, \cdots, \x_{n^*} \}\right)}_{\geq 1~(\text{from \eqref{eq:cond3}})}
\end{align*}
We conclude that $n^* \geq d - \text{Rank}(A) + 1$. We completed the proof for LB1.
\end{proof}

LB2 observes that the regularizer acts as a prior.  If $\lambda$ is large, more items are needed to sway the prior toward the target $\btheta^*$.


\begin{theorem}
\label{thm:LB2}
Given any target model $\btheta^*$, 
there is a strength-of-regularization lower bound on the required number of training items to obtain a unique solution $\btheta^*$ from solving \eqref{eq:main}: 
\begin{equation}
\label{eq:thm:3}
LB2 = 
\begin{cases}
\left\lceil {\lambda}{\left(\sup_{\alpha \in \R, y\in \mathcal{Y}, g\in -\partial_1 \ell(\alpha \|\btheta^*\|^2_A, y)} \alpha g\right)^{-1}} \right\rceil, & \text{if $A$ has full rank and $\btheta^* \neq 0$} \\
0, & \text{otherwise.}
\end{cases}
\end{equation} 
\end{theorem}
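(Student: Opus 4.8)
The plan is to exploit the KKT stationarity condition \eqref{eq:kkt} and contract it against the target $\btheta^*$ itself, turning the vector identity into a scalar ``budget'' equation in which $\lambda$ must be paid for by at most $n^*$ bounded contributions, each no larger than the supremum appearing in the claim.

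First I would dispose of the ``otherwise'' branch: whenever $A$ is rank-deficient or $\btheta^* = {\bf 0}$, the claimed value is $LB2 = 0$, which is a vacuous lower bound since $n^* \ge 0$ always holds. So only the case where $A$ has full rank and $\btheta^* \neq {\bf 0}$ requires work; note that here $A$ is positive definite, so $\|\btheta^*\|_A^2 = (\btheta^*)^\top A \btheta^* > 0$, a fact I will need in order to divide.

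For the main case, let $n^* \ge 1$ be the size of a teaching set and invoke the KKT condition exactly as in \eqref{eq:cond0}: there exist $g_i^* \in \partial_1 \ell(\x_i^\top \btheta^*, y_i)$ with $-\lambda A \btheta^* = \sum_{i=1}^{n^*} g_i^* \x_i$. Taking the inner product of both sides with $\btheta^*$ collapses the left side to $-\lambda \|\btheta^*\|_A^2$ and the right side to $\sum_{i=1}^{n^*} g_i^* (\x_i^\top \btheta^*)$. Writing $\alpha_i := \x_i^\top \btheta^* / \|\btheta^*\|_A^2$ and dividing through by the positive quantity $\|\btheta^*\|_A^2$ yields the scalar identity $\lambda = \sum_{i=1}^{n^*} (-g_i^*)\, \alpha_i$. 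The key observation is that $\x_i^\top \btheta^* = \alpha_i \|\btheta^*\|_A^2$, so $g_i^* \in \partial_1 \ell(\alpha_i \|\btheta^*\|_A^2, y_i)$ and hence $-g_i^* \in -\partial_1 \ell(\alpha_i \|\btheta^*\|_A^2, y_i)$, with $\alpha_i \in \R$ and $y_i \in \mathcal{Y}$. Each summand $\alpha_i (-g_i^*)$ is therefore a feasible point of the supremum defining $LB2$, so it is bounded above by $S := \sup_{\alpha \in \R, y\in\mathcal{Y}, g \in -\partial_1 \ell(\alpha\|\btheta^*\|_A^2, y)} \alpha g$. Summing the $n^*$ terms gives $\lambda \le n^* S$, hence $n^* \ge \lambda / S$, and since $n^*$ is an integer we conclude $n^* \ge \lceil \lambda / S \rceil = LB2$.

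The step I expect to demand the most care is not any single inequality but the bookkeeping around the supremum. I must verify that $S > 0$ so that $\lambda / S$ and its ceiling are well-defined and positive; this follows for free from the budget identity, since $\sum_{i=1}^{n^*} \alpha_i(-g_i^*) = \lambda > 0$ forces at least one summand to be positive, giving $S \ge \lambda / n^* > 0$ (equivalently, if the supremum were nonpositive no teaching set could exist and $TD = \infty$ would trivially dominate). I would also double-check that the reparametrization $\alpha_i = \x_i^\top\btheta^*/\|\btheta^*\|_A^2$ correctly matches the first argument of $\ell$ inside the subdifferential, since the whole argument hinges on recognizing each KKT summand as an instance of the very quantity being maximized.
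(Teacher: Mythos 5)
Your proof is correct and follows essentially the same route as the paper's: both arguments reduce the KKT stationarity condition to the scalar budget identity $\lambda = \sum_{i=1}^{n^*}\alpha_i(-g_i^*)$ with $\alpha_i = \x_i^\top\btheta^*/\|\btheta^*\|_A^2$, recognize each summand as a feasible point of the supremum $S$, and conclude $n^*\ge\lceil\lambda/S\rceil$. The only difference is mechanical --- you obtain the identity by contracting the KKT equation directly against $\btheta^*$, whereas the paper first decomposes $A^{-1/2}\x_i$ orthogonally to $A^{1/2}\btheta^*$ and then projects out the orthogonal parts; your version is slightly more direct and has the additional merit of explicitly verifying $S>0$.
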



\begin{proof}
When $A$ has full rank we have an equivalent expression for the KKT condition \eqref{eq:kkt}: 
\begin{align}
-\lambda A^{1\over 2} \btheta^* \in \sum_{i=1}^{n^*} A^{-{1\over 2}} \x_i \partial_1 \ell(\x_i^\top \btheta^*, y_i)\quad \forall i = 1,\cdots, n^*.
\label{eq:kkt_2}
\end{align}
Let us decompose $A^{-{1\over 2}} \x_i$ for all $i=1,\cdots, n^*$ into $A^{-{1\over 2}} \x_i = \alpha_i A^{1\over 2}\btheta^* + \u_i$, where $\u_i$ is orthogonal to $A^{1\over 2} \btheta^*$: $\u_i^\top A^{1\over 2} \btheta^* = 0$. 
Equivalently $\x_i = \alpha_i A \btheta^* + A^{1\over 2}\u_i$.
Applying this decomposition, we have
\[
\x_i^\top \btheta^* = \alpha_i \|\btheta^*\|^2_A + \u_i^\top A^{1 \over 2} \btheta^* = \alpha_i \|\btheta^*\|^2_A.
\]
Putting it back in \eqref{eq:kkt_2} we obtain
\begin{align}
-\lambda A^{1\over 2} \btheta^* \in & \sum_{i=1}^{n^*}\left(\alpha_i A^{{1\over 2}}\btheta^* +\u_i\right)\partial_1\ell(\alpha_i \|\btheta^*\|^2_A, y_i) 
 \quad \forall i = 1,\cdots, n^*.
\label{eq:kkt_3}
\end{align} 
Since $\u_i$ is orthogonal to $A^{1\over 2}\btheta^*$, \eqref{eq:kkt_3} can be rewritten as
\begin{align}
\label{eq:thm_proof_1}
&\exists \alpha_i \in \R,~\exists y_i\in \mathcal{Y},~\exists g_i \in \partial_1 \ell(\alpha_i\|\btheta^*\|^2_A, y_i)\quad \forall i=1, \cdots, n^* \\
\nonumber
\text{satisfying}\quad &\sum_{i=1}^{n^*} g_i\u_i = 0
\\ &
-\lambda A^{1\over 2}\btheta^* = A^{1\over 2}\btheta^* \sum_{i=1}^{n^*}\alpha_ig_i
\label{eq:thm_proof_3}
\end{align}
Since $A\btheta^* \neq 0$, we have $A^{1\over 2}\btheta^* \neq 0$ and \eqref{eq:thm_proof_3} is equivalent to $-\lambda = \sum_{i=1}^{n^*} \alpha_i g_i$. 
It follows that
\[
\lambda = -\sum_{i=1}^{n^*} \alpha_i g_i \leq n^* \sup_{\alpha\in\R, y\in \mathcal{Y}, g\in \partial_1 \ell(\alpha \|\btheta^*\|^2_A, y)} -\alpha g = n^* \sup_{\alpha\in\R, y\in \mathcal{Y}, g\in -\partial_1 \ell(\alpha \|\btheta^*\|^2_A, y)} \alpha g
\]
It indicates the lower bound for $n^*$
\[
n^* \geq \left\lceil \frac{\lambda}{\sup_{\alpha\in\R, y\in \mathcal{Y}, g\in -\partial_1 \ell(\alpha \|\btheta^*\|^2_A, y)} \alpha g} \right\rceil.
\]
\end{proof}

LB1 and LB2 apply to all generalized linear learners.
Due to the popularity of inhomogeneous margin-based linear learners (which include the standard form of SVM and logistic regression), we provide a tighter lower bound LB3 for such learners in Theorem~\ref{thm:classification}. 
For inhomogeneous margin-based linear learners the learning algorithm $\A_{opt}$ solves a special form of \eqref{eq:main}: 
\begin{equation}
\A_{opt}(D) = \Argmin_{\w, b}\quad \sum_{i=1}^n \ell (y_i(\x_i^\top \w + b)) + {\lambda \over 2} \|\w\|^2_A.
\label{eq:classification}
\end{equation}
LB3 will prove to be instrumental in computing the teaching dimension for those learners.
Following standard notation, we define $\btheta=[\w; b]$ where $\w \in \R^d$ is the weight vector and $b \in \R$ the bias (offset) term.
Note $\btheta \in \R^{d+1}$ now.
The $d \times d$ regularization matrix $A$ applies only to $\w$ while $b$ is not regularized.
Furthermore, margin-based linear learners have loss functions defined on the margin $y(\x^\top \w + b)$.
This loss function structure will play a key role in obtaining LB3.

\begin{theorem} \label{thm:classification}
Assume matrix $A$ in \eqref{eq:classification} has full rank and $\w^* \neq {\bf 0}$. Given any target model $[\w^*; b^*]$, there is an inhomogeneous-margin lower bound on the required number of training items to obtain a unique solution $[\w^*; b^*]$ from solving \eqref{eq:classification}:
\begin{align}
LB3 =  \left\lceil {\lambda}{\left(\sup_{\alpha\in \R, g\in -\partial \ell(\alpha \|\w^*\|^2_A)} \alpha g \right)^{-1}} \right\rceil.
\label{eq:thm:classification}
\end{align}
\end{theorem}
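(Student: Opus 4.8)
The plan is to mirror the proof of LB2 (Theorem~\ref{thm:LB2}) while exploiting the additional structure created by the unregularized bias $b$. Because $[\w^*;b^*]$ is optimal for~\eqref{eq:classification}, stationarity now yields \emph{two} KKT conditions rather than one. Differentiating the objective with respect to $\w$ gives $-\lambda A\w^* = \sum_{i=1}^{n^*} g_i y_i \x_i$, while differentiating with respect to the (unregularized) bias $b$ gives the extra scalar condition $\sum_{i=1}^{n^*} g_i y_i = 0$, where $g_i \in \partial \ell(y_i(\x_i^\top \w^* + b^*))$ is a subgradient of the margin loss; the existence of such $g_i$ is guaranteed by optimality. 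I expect this bias condition to be the crucial new ingredient that distinguishes LB3 from LB2.

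Next I would reuse the $A^{1/2}$ change of variables from LB2. Since $A$ has full rank, I rewrite the $\w$-stationarity condition as $-\lambda A^{1/2}\w^* = \sum_i g_i y_i A^{-1/2}\x_i$ and decompose $A^{-1/2}\x_i = \alpha_i A^{1/2}\w^* + \u_i$ with $\u_i^\top A^{1/2}\w^* = 0$. As in LB2 this gives $\x_i^\top \w^* = \alpha_i \|\w^*\|^2_A$, and projecting the stationarity identity onto the nonzero direction $A^{1/2}\w^*$ collapses the orthogonal part and leaves the scalar equation $-\lambda = \sum_i \alpha_i g_i y_i$.

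The key step is then to convert $\alpha_i g_i y_i$ into a quantity governed only by the margin. Writing the margin as $\beta_i := y_i(\x_i^\top \w^* + b^*) = \alpha_i y_i \|\w^*\|^2_A + y_i b^*$, I solve for $\alpha_i y_i = (\beta_i - y_i b^*)/\|\w^*\|^2_A$ and substitute into $-\lambda = \sum_i \alpha_i g_i y_i$. This produces a term $-b^*\sum_i g_i y_i$, which vanishes precisely because of the bias KKT condition $\sum_i g_i y_i = 0$; this is where the inhomogeneous structure pays off, and why no separate $y$ or $b^*$ survives in the final bound (contrast with the $\sup$ over $y$ in LB2). What remains is the clean identity $-\lambda\|\w^*\|^2_A = \sum_{i=1}^{n^*} \beta_i g_i$ with $g_i \in \partial\ell(\beta_i)$.

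Finally I would bound the sum by $n^*$ copies of its worst summand. Since each $\beta_i$ ranges over $\R$ and $g_i \in \partial\ell(\beta_i)$, we get $\lambda\|\w^*\|^2_A = \sum_i(-\beta_i g_i) \le n^* \sup_{\beta\in\R,\, g\in-\partial\ell(\beta)}\beta g$. Reparametrizing $\beta = \alpha\|\w^*\|^2_A$ (valid since $\|\w^*\|^2_A>0$) rewrites this supremum as $\|\w^*\|^2_A \sup_{\alpha\in\R,\,g\in-\partial\ell(\alpha\|\w^*\|^2_A)}\alpha g$, so the $\|\w^*\|^2_A$ factors cancel and I obtain $n^* \ge \lambda / \sup_{\alpha,g}\alpha g$; taking the ceiling of the integer $n^*$ yields exactly LB3. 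The only real obstacle is bookkeeping the bias term correctly, and everything hinges on recognizing that the $b$-stationarity condition exactly annihilates the $b^*$ contribution.
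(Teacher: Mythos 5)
Your proposal is correct and follows essentially the same route as the paper's proof: both arguments hinge on the bias stationarity condition $\sum_i g_i y_i = 0$ to annihilate the $b^*$ contribution, and then reuse the LB2 machinery (the decomposition $A^{-1/2}\x_i = \alpha_i A^{1/2}\w^* + \u_i$, projection onto $A^{1/2}\w^*$, and the $n^*\sup$ bound). The only difference is presentational --- the paper packages the cancellation as a data shift $\hat\x_i = \x_i + \frac{b^*}{\|\w^*\|^2_A}A\w^*$ that reduces the problem to the homogeneous case, whereas you cancel the $b^*\sum_i g_i y_i$ term directly by substituting the margin $\beta_i$; the resulting scalar identity and final supremum bound are identical.
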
 


\begin{proof}
Let $D=\{\x_i, y_i\}_{i=1, \cdots, n}$ be a teaching set for $[\w^*; b^*]$. The following KKT condition needs to be satisfied:
\begin{align}
{\bf 0} \in \sum_{i=1}^n \partial \ell (y_i (\x_i^\top \w^* + b^*)) y_i 
\left[
\begin{matrix}
\x_i \\ 1
\end{matrix}
\right]
+ 
\left[
\begin{matrix}
\lambda A \w^* \\ 0
\end{matrix}
\right].
\label{eq:proof:class:1}
\end{align}
If we construct a new training set 
\[
\hat D = \left\{\hat{\x}_i = \x_i + \frac{b^*}{\|\w^*\|^2_A}A\w^*,~\hat{y}_i = y_i \right\}_{i=1, \cdots, n}
\]
then $[\w^*; 0]$ satisfies the KKT condition defined on $\hat D$.
This can be verified as follows:
\begin{align}
\nonumber 
&\sum_{i=1}^n \partial \ell (\hat{y}_i (\hat{\x}_i^\top {\w}^*)) \hat{y_i} 
\left[
\begin{matrix}
\hat{\x}_i \\ 1
\end{matrix}
\right]
+\left[
\begin{matrix}
\lambda A \w^* \\ 0
\end{matrix}
\right]
\\ \nonumber = &
\sum_{i=1}^n \partial \ell (y_i (\x_i^\top \w^* + b^*)) y_i 
\left[
\begin{matrix}
\x_i + \frac{b^*}{\|\w^*\|^2_A}A\w^* \\ 1
\end{matrix}
\right]
+ 
\left[
\begin{matrix}
\lambda A \w^* \\ 0
\end{matrix}
\right]
\\ \nonumber = &
\underbrace{\sum_{i=1}^n \partial \ell (y_i (\x_i^\top \w^* + b^*)) y_i 
\left[
\begin{matrix}
\x_i\\ 1
\end{matrix}
\right]
+ 
\left[
\begin{matrix}
\lambda A \w^* \\ 0
\end{matrix}
\right]}_{\ni {\bf 0}~\text{from~\eqref{eq:proof:class:1}}} + 
\left[
\begin{matrix}
 \frac{b^*}{\|\w^*\|^2_A}A\w^* \\ 0
\end{matrix}
\right]\underbrace{\sum_{i=1}^n \partial \ell (y_i (\x_i^\top \w^* + b^*)) y_i}_{\ni {0} ~\text{from~\eqref{eq:proof:class:1}}}
\\ \ni & {\bf 0} 
\label{eq:proof:class:2}
\end{align}
where $0\in \sum_{i=1}^n \partial \ell (y_i (\x_i^\top \w^* + b^*)) y_i$ is from the bias dimension in \eqref{eq:proof:class:1}. It follows that

\begin{align*}
{\bf 0} \in & \sum_{i=1}^n \partial \ell (\hat{y}_i\hat{\x}_i^\top \w^*) \hat{y}_i \hat{\x}_i + \lambda A\w^*
\end{align*}
which is equivalent to
\begin{align}
\nonumber
{\bf 0} \in & \sum_{i=1}^n \partial \ell (\hat{y}_i\hat{\x}_i^\top \w^*) A^{-{1\over 2}}\underbrace{\hat{y}_i \hat{\x}_i}_{=:\z_i} + \lambda A^{{1\over 2}}\w^*
\\ = &  \sum_{i=1}^n \partial \ell (\z_i^\top \w^*) A^{-{1\over 2}}{\z}_i + \lambda A^{1\over 2}\w^*.
\label{eq:proof:class:3}
\end{align}
We decompose $A^{-{1\over 2}}\z_i = \alpha_i A^{1\over 2} \w^* + \u_i$ where $\u_i$ satisfies $\u_i^\top A^{1\over 2} \w^*=0$. Applying this decomposition to \eqref{eq:proof:class:3}, we have
\begin{align}
\lambda A^{1\over 2}\w^* \in \sum_{i=1}^n - \partial \ell (\alpha_i \|\w^*\|^2_A) (\alpha_i A^{1\over 2} \w^* + \u_i).
\label{eq:proof:class:4}
\end{align}
Since $\u_i$ is orthogonal to $A^{1\over 2}\w^*$, \eqref{eq:proof:class:4} implies that
\begin{align*}
\lambda A^{1\over 2}\w^* \in \sum_{i=1}^n - \partial \ell (\alpha_i \|\w^*\|^2_A) \alpha_i A^{1\over 2} \w^*
\end{align*}
Since $\w^* \neq {\bf 0}$ we have
\[
\lambda \in \sum_{i=1}^n - \partial \ell (\alpha_i \|\w^*\|^2_A) \alpha_i 
\]
Together with 
\[
\sum_{i=1}^n - \partial \ell (\alpha_i \|\w^*\|^2_A) \alpha_i \leq n \sup_{\alpha\in\R, g\in -\partial \ell(\alpha \|\w^*\|^2_A)}\alpha g,
\]
we obtain LB3.
\end{proof}

\subsection{The Teaching Dimension $TD(\btheta^*, \A_{opt})$ of Three Homogeneous Learners}
\label{sec:UBhomogeneous}

We now turn to upper bounding teaching dimension by constructing teaching sets.  
To prove that we indeed have a teaching set for a target $\btheta^*$, we need to show that $\btheta^*$ is a solution of~\eqref{eq:main}, and the solution is unique.
The size of any such teaching set is an upper bound on the teaching dimension.
The teaching dimension itself is determined if such an upper bound matches the corresponding lower bound. 
We show that this is indeed the case for our constructed teaching sets.
For the sake of reference we preview in Table~\ref{tab:LB} the instantiated lower bounds that we will use in this section; their derivation will be shown below.

\begin{table}[ht]
\begin{center}
\begin{tabular}{|c||c|c|c||c|c|c|}
\hline
& \multicolumn{3}{c||}{homogeneous} & \multicolumn{3}{c|}{inhomogeneous} \\ \cline{2-7}
lower bound	& ridge & SVM & logistic & ridge & SVM & logistic \\
\hline
LB1
	& 1
	& 1
	& 1
	& 2
	& 2
	& 2  \\
\hline
LB2
	& 0 
	& $\left\lceil \lambda \|\btheta^*\|^2 \right\rceil$
	& $\left\lceil {\lambda \|\btheta^*\|^2 \over \tau_{\max}} \right\rceil$
	& 0 
	& 0 
	& 0 \\
\hline
LB3
	& -
	& -
	& -
	& -
	& $\left\lceil \lambda \|\w^*\|^2 \right\rceil$
	& $\left\lceil {\lambda \|\w^*\|^2 \over \tau_{\max}} \right\rceil$ \\
\hline
\end{tabular}
\end{center}
\caption{Lower bounds of teaching dimension $TD(\btheta^*, \A_{opt})$ for homogeneous and inhomogeneous versions of ridge regression, SVM, and logistic regression.}  
\label{tab:LB}
\end{table}

Teaching dimension is learner-dependent.
We choose three learners to study their teaching dimension due to these learners' popularity in machine learning: ridge regression, SVM, and logistic regression.  
It turns out that homogeneous and inhomogeneous versions of these learners require different analysis. 
We devote this section to the homogeneous version where the regularizer matrix $A=I$ the identity matrix, and the next section to the inhomogeneous version.
It is possible to extend our analysis to other linear learners of the form~\eqref{eq:main}.

It is easy to see that if the target model $\btheta^*={\bf 0}$, we do not need any training data to uniquely obtain the target model from~\eqref{eq:main}. In the following, we only consider the nontrivial case $\btheta^* \neq {\bf 0}$. 

\textbf{Homogeneous ridge regression} solves the following optimization problem:
\begin{align}
\min_{\btheta\in \R^d}\quad \sum_{i=1}^n {1\over 2} (\x_i^\top \btheta - y_i)^2 + {\lambda \over 2}\|\btheta\|^2
\label{eq:hlr}
\end{align}
We only need one training item to uniquely obtain any nonzero target model $\btheta^*$, as the following construction shows.
\begin{prop} \label{thm:hlr}
Given any target model $\btheta^* \neq 0$, the following is a teaching set for homogeneous ridge regression~\eqref{eq:hlr}:
\begin{equation}
\x_1 = a\btheta^*, \quad
y_1 = \frac{\lambda + \|\x_1\|^2}{a}
\label{eq:hlr:construction}
\end{equation}
where $a$ can be any nonzero real number.
\end{prop}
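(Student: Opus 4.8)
The plan is to verify directly that the single training item in \eqref{eq:hlr:construction} makes $\btheta^*$ the \emph{unique} minimizer of \eqref{eq:hlr}. I would first dispose of uniqueness structurally: with $A=I$ and $n=1$ the objective
\begin{equation*}
f(\btheta) = {1\over 2}(\x_1^\top \btheta - y_1)^2 + {\lambda \over 2}\|\btheta\|^2
\end{equation*}
is $\lambda$-strongly convex, since the squared loss is convex and $\lambda>0$ makes the regularizer strongly convex. Strong convexity guarantees a unique global minimizer, so the entire task collapses to checking that $\btheta^*$ is a stationary point; uniqueness then follows for free and I need not argue it separately.

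Next I would write the first-order optimality condition, which for this smooth convex problem is both necessary and sufficient for global optimality:
\begin{equation*}
(\x_1^\top \btheta^* - y_1)\x_1 + \lambda \btheta^* = {\bf 0}.
\end{equation*}
The key computation is to substitute the construction. Using $\x_1 = a\btheta^*$ gives $\x_1^\top \btheta^* = a\|\btheta^*\|^2$ and $\|\x_1\|^2 = a^2\|\btheta^*\|^2$, so the prescribed label yields the residual
\begin{equation*}
\x_1^\top \btheta^* - y_1 = a\|\btheta^*\|^2 - {\lambda + \|\x_1\|^2 \over a} = -{\lambda \over a}.
\end{equation*}
Plugging this back, the loss-gradient term becomes $-{\lambda \over a}\cdot a\btheta^* = -\lambda\btheta^*$, which cancels the regularization gradient $\lambda\btheta^*$ exactly, so the stationarity condition holds and $\btheta^*$ is the unique solution of \eqref{eq:hlr}.

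There is no genuinely hard step here: the construction is reverse-engineered so that the residual points along $-\btheta^*$ with magnitude tuned to cancel $\lambda\btheta^*$. The only things to keep an eye on are that $a\neq 0$ is required for $\x_1\neq{\bf 0}$ and for $y_1$ to be well defined (both granted), and that the cancellation uses $\btheta^*\neq{\bf 0}$ only to keep the item nondegenerate, not to force the stationarity. Finally I would remark that this teaching set has size $1$, which matches $LB1 = d - \text{Rank}(A) + 1 = 1$ from Theorem~\ref{thm:LB1} (here $A=I$ has full rank and $A\btheta^*=\btheta^*\neq{\bf 0}$) together with $LB2=0$; hence the construction is not merely a teaching set but an \emph{optimal} one, confirming the homogeneous ridge entry of Table~\ref{tab:main}.
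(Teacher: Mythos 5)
Your proof is correct and follows the same route as the paper's: verify the first-order (KKT) condition at $\btheta^*$ under the construction~\eqref{eq:hlr:construction} and invoke the $\lambda$-strong convexity of~\eqref{eq:hlr} for uniqueness. The paper's proof is a one-line sketch of exactly this; you have merely carried out the residual computation $\x_1^\top\btheta^* - y_1 = -\lambda/a$ explicitly, and your closing remark about matching $LB1=1$ is the content of the corollary that immediately follows in the paper.
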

\begin{proof}
We simply verify the KKT condition to see that $\btheta^*$ is a solution to \eqref{eq:hlr} by applying the construction in \eqref{eq:hlr:construction}. The uniqueness of $\btheta^*$ is guaranteed by the strong convexity of \eqref{eq:hlr}.
\end{proof}

We encourage the reader to distinguish two senses of uniqueness.  The teaching set itself is not necessarily unique.  In the construction~\eqref{eq:hlr:construction}, any $a \neq 0$ leads to a valid teaching set.  
Nonetheless, any one of the teaching sets will lead to the unique solution $\btheta^*$ in~\eqref{eq:hlr}.

\begin{cor}
The teaching dimension $TD(\btheta^*, \A_{ridge}^{hom})=1$ for homogeneous ridge regression and target $\btheta^*\neq \bf 0$. 
\end{cor}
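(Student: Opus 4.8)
The plan is to prove equality by sandwiching $TD(\btheta^*, \A_{ridge}^{hom})$ between a matching upper bound and lower bound. The upper bound $TD(\btheta^*, \A_{ridge}^{hom}) \le 1$ is immediate: Proposition~\ref{thm:hlr} already exhibits an explicit size-one teaching set (namely $\x_1 = a\btheta^*$ with the prescribed label $y_1$), and by definition the size of any teaching set is an upper bound on the teaching dimension. Since a teaching set of size $1$ exists, we get $TD \le 1$.

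For the lower bound $TD(\btheta^*, \A_{ridge}^{hom}) \ge 1$, I would instantiate the degree-of-freedom bound LB1 from Theorem~\ref{thm:LB1}. Homogeneous ridge regression~\eqref{eq:hlr} is the special case of~\eqref{eq:main} with $A = I$, so $\text{Rank}(A) = d$. Because we are in the nontrivial case $\btheta^* \neq {\bf 0}$, we have $A\btheta^* = \btheta^* \neq {\bf 0}$, placing us in the first branch of~\eqref{eq:thm:1}. Hence
\begin{equation}
LB1 = d - \text{Rank}(A) + 1 = d - d + 1 = 1,
\end{equation}
and by Theorem~\ref{thm:LB1} any teaching set must contain at least $LB1 = 1$ item. (This matches the homogeneous-ridge column of Table~\ref{tab:LB}; the regularization bound LB2 contributes only $0$ here since the ridge loss is unbounded, so LB1 is the binding constraint.)

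Combining the two bounds gives $1 \le TD(\btheta^*, \A_{ridge}^{hom}) \le 1$, i.e.\ $TD(\btheta^*, \A_{ridge}^{hom}) = 1$. There is no real obstacle in this argument; the only thing to verify carefully is that the hypotheses of Theorem~\ref{thm:LB1} are met under the homogeneous ridge setup—specifically that $A = I$ forces the $A\btheta^* \neq {\bf 0}$ branch precisely when $\btheta^* \neq {\bf 0}$—so that the lower bound evaluates to exactly $1$ rather than to $d - \text{Rank}(A) = 0$. Once that branch selection is confirmed, the corollary follows directly from the already-established upper and lower bounds.
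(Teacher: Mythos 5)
Your proof is correct and follows essentially the same route as the paper: instantiate LB1 with $A=I$ (noting $A\btheta^*=\btheta^*\neq{\bf 0}$ selects the $d-\mathrm{Rank}(A)+1$ branch, giving a lower bound of $1$) and match it against the size-one teaching set of Proposition~\ref{thm:hlr}. Your explicit check of the branch condition is a detail the paper leaves implicit, but there is no substantive difference.
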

\begin{proof}
Substituting $A$ by $I$ in LB1~\eqref{eq:thm:1}, we obtain the lower bound $d-\text{Rank}(I) + 1 = 1$ which matches the teaching set size in~\eqref{eq:hlr:construction}.
\end{proof}

\textbf{Homogeneous SVM} solves the problem:
\begin{align}
\min_{\btheta\in \R^d}\quad \sum_{i=1}^n \max(1-y_i\x_i^\top \btheta,~0) + {\lambda \over 2}\|\btheta\|^2.
\label{eq:hsvm}
\end{align}
To teach this learner one training item is in general not enough:
we will show that we need $\left\lceil \lambda \|\btheta^*\|^2 \right\rceil$ training items.
In fact, we will construct such a teaching set consisting of \emph{identical} training items.
It is well-known in the teaching literature that a teaching set does not need to consist of $i.i.d.$ samples from a distribution, and can look unusual. 
It is possible to incorporate additional constraints into a teaching problem if one wants the training items to be diverse, but we do not consider that in the present paper.

\begin{prop} \label{thm:hsvm}
Given any target model $\btheta^* \neq 0$, the following is a teaching set for homogeneous SVM~\eqref{eq:hsvm}.  There are
$n=\left\lceil \lambda \|\btheta^*\|^2 \right\rceil$
identical training items, each taking the form
\begin{equation}
\x_i =  \frac{\lambda \btheta^*}{\left\lceil \lambda \|\btheta^*\|^2 \right\rceil}, 
\quad
y_i = 1.
\label{eq:hsvm:construction}
\end{equation}
\end{prop}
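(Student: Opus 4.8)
The plan is to verify that the proposed $\btheta^*$ satisfies the KKT optimality condition~\eqref{eq:kkt} for the convex objective~\eqref{eq:hsvm} under the construction~\eqref{eq:hsvm:construction}, and then to deduce uniqueness from strong convexity, exactly in the spirit of Proposition~\ref{thm:hlr}. Write $M := \lambda\|\btheta^*\|^2$ and $n := \lceil M \rceil$. Since all $n$ training items are identical, I would set $\x := \lambda\btheta^*/n$ so that, using $y_i = 1$, the objective collapses to $f(\btheta) = n\,\max(1-\x^\top\btheta,0) + \tfrac{\lambda}{2}\|\btheta\|^2$.

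First I would compute the margin at the target. A one-line calculation gives $\x^\top\btheta^* = \lambda\|\btheta^*\|^2/n = M/\lceil M\rceil \le 1$, with equality precisely when $M$ is an integer. Thus every training point lies on or strictly inside the margin, and the object of interest is the subdifferential of the hinge loss $\partial_s \max(1-s,0)$ evaluated at $s = \x^\top\btheta^*$: this equals $\{-1\}$ when $M \notin \mathbb{Z}$ (margin strictly below $1$) and $[-1,0]$ when $M \in \mathbb{Z}$ (margin exactly $1$). In either case $-1$ is an admissible value of the subgradient $g_i \in \partial_1 \ell(\x_i^\top\btheta^*, y_i)$.

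Next I would verify~\eqref{eq:kkt} with $A = I$. Choosing $g_i = -1$ for every $i$, the right-hand side becomes $\sum_{i=1}^n g_i \x_i = -n\x = -\lambda\btheta^*$, which matches the left-hand side $-\lambda A \btheta^* = -\lambda\btheta^*$ exactly, so the membership in~\eqref{eq:kkt} holds. Because the hinge loss is convex in its first argument, the KKT condition is sufficient for global optimality (as already invoked in Theorem~\ref{thm:LB1}), and therefore $\btheta^*$ solves~\eqref{eq:hsvm}. Uniqueness then follows as for ridge regression: the hinge-loss term is convex while the regularizer $\tfrac{\lambda}{2}\|\btheta\|^2$ is $\lambda$-strongly convex, so $f$ is strictly convex and its minimizer is unique.

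The only delicate point — and the one I would treat most carefully — is the integer case $M \in \mathbb{Z}$, where the $n$ identical points sit exactly on the margin and the loss is nondifferentiable there. Here one cannot simply differentiate; instead one must use the subdifferential $[-1,0]$ and check that the specific choice $g_i = -1$, which is the endpoint value forcing $\sum_i g_i = -n$ and hence $\sum_i g_i \x_i = -\lambda\btheta^*$, is legitimate. The non-integer case is cleaner because the gradient is uniquely $-1$, but flagging that $-1$ is valid in \emph{both} regimes is what makes the single construction~\eqref{eq:hsvm:construction} work uniformly, and is the crux of the verification.
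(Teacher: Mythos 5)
Your proposal is correct and follows essentially the same route as the paper: verify the KKT stationarity condition at $\btheta^*$ by noting the common margin $M/\lceil M\rceil\le 1$ puts $-1$ in the hinge subdifferential (the paper phrases this via its indicator notation $\I(\cdot)\in\{1,[0,1]\}$), then invoke strong convexity of the regularized objective for uniqueness. Your explicit flagging of the integer boundary case matches the paper's closing remark and adds nothing that diverges from its argument.
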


\begin{proof}
We only need to verify that the KKT condition holds for $\btheta^*$.
Due to the strong convexity of~\eqref{eq:hsvm} uniqueness is guaranteed automatically.  
We denote the subgradient $\partial_a \max(1-a,0) = - \partial_1 \max(1-a,0) = - {\bf I}(a)$, where
\begin{equation}
{\bf I}(a)= 
\begin{cases}
    1,& \text{if } a<1\\
    [0,1],& \text{if } a=1\\
    0,              & \text{otherwise}
\end{cases}.
\label{eq:def:indictor}
\end{equation}
The KKT condition is
\begin{align*}
& \sum_{i=1}^n -y_i\x_i\partial_1 \max(1- y_i\x_i^\top \btheta^*, 0) + \lambda \btheta^* 
\\ = & 
\sum_{i=1}^n -y_i\x_i \I(y_i\x_i^\top \btheta^*) + \lambda \btheta^* 
\\ = & 
-n\frac{\lambda \btheta^*}{\left\lceil \lambda \|\btheta^*\|^2 \right\rceil} \I\left(\frac{\lambda\|\btheta^*\|^2}{\left\lceil \lambda \|\btheta^*\|^2 \right\rceil}\right) +  \lambda \btheta^*
\\ = &
-\lambda \btheta^* \I\left(\frac{\lambda\|\btheta^*\|^2}{\left\lceil \lambda \|\btheta^*\|^2 \right\rceil}\right) +  \lambda \btheta^* 
\\ \ni & {\bf 0}
\end{align*}
where the last line is due to $\I\left(\frac{\lambda\|\btheta^*\|^2}{\left\lceil \lambda \|\btheta^*\|^2 \right\rceil}\right)$ giving either the set $[0, 1]$ or the value 1.
\end{proof}

\begin{cor}
\label{cor:hsvm}
The teaching dimension $TD(\btheta^*, \A_{svm}^{hom})=\left\lceil \lambda \|\btheta^*\|^2 \right\rceil$ for homogeneous SVM and target $\btheta^*\neq \bf 0$. 
\end{cor}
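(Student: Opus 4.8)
The plan is to sandwich $TD(\btheta^*, \A_{svm}^{hom})$ between matching upper and lower bounds. The upper bound is immediate from Proposition~\ref{thm:hsvm}, which exhibits an explicit teaching set of size $\left\lceil \lambda \|\btheta^*\|^2 \right\rceil$; since by definition the teaching dimension is at most the size of any teaching set, we get $TD(\btheta^*, \A_{svm}^{hom}) \le \left\lceil \lambda \|\btheta^*\|^2 \right\rceil$. All the work is therefore in establishing the matching lower bound, and for this I would instantiate LB2 (Theorem~\ref{thm:LB2}).

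For homogeneous SVM we have $A = I$, which has full rank, and $\btheta^* \neq \mathbf{0}$, so the first branch of~\eqref{eq:thm:3} applies and $\|\btheta^*\|_A^2 = \|\btheta^*\|^2$. The relevant loss is $\ell(s, y) = \max(1 - y s, 0)$ with $s = \x^\top\btheta$, whose subdifferential in the first argument is $\partial_1 \ell(s, y) = -y\,\I(y s)$, with $\I$ the indicator set defined in~\eqref{eq:def:indictor}; this is exactly the quantity used in the KKT computation of Proposition~\ref{thm:hsvm}. Hence $-\partial_1\ell(\alpha\|\btheta^*\|^2, y) = y\,\I(y\alpha\|\btheta^*\|^2)$, and the term inside LB2 becomes
\[
\sup_{\alpha \in \R,\ y \in \{-1,1\},\ g \in y\,\I(y\alpha\|\btheta^*\|^2)} \alpha g.
\]

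The crux — and the step I expect to require the most care — is evaluating this supremum. I would substitute $t := \alpha y$ (using $y^2 = 1$) and write $g = y\iota$ with $\iota \in \I(y\alpha\|\btheta^*\|^2) = \I(t\|\btheta^*\|^2)$, so that $\alpha g = \alpha y \iota = t\iota$. As $\alpha$ ranges over $\R$ and $y$ over $\{-1,1\}$, $t$ ranges over all of $\R$, so the problem reduces to maximizing $t\iota$ over $t \in \R$ and $\iota \in \I(t\|\btheta^*\|^2)$. Writing $s := \|\btheta^*\|^2 > 0$ and inspecting the definition of $\I$: for $t \le 0$ we have $\iota \ge 0$ so $t\iota \le 0$; for $0 < t < 1/s$ we have $\iota = 1$ and $t\iota = t < 1/s$; at $t = 1/s$ we have $\iota \in [0,1]$ so $t\iota$ attains its largest value $1/s$; and for $t > 1/s$ we have $\iota = 0$. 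Thus the supremum equals $1/\|\btheta^*\|^2$, giving
\[
LB2 = \left\lceil \frac{\lambda}{1/\|\btheta^*\|^2} \right\rceil = \left\lceil \lambda \|\btheta^*\|^2 \right\rceil.
\]

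Finally I would combine the two directions: $TD(\btheta^*, \A_{svm}^{hom}) \ge LB2 = \left\lceil \lambda \|\btheta^*\|^2 \right\rceil$ from Theorem~\ref{thm:LB2} and $TD(\btheta^*, \A_{svm}^{hom}) \le \left\lceil \lambda \|\btheta^*\|^2 \right\rceil$ from Proposition~\ref{thm:hsvm}, whence equality. I note that LB1 instantiates here to $d - \text{Rank}(I) + 1 = 1$, which is dominated by LB2 and so plays no role beyond confirming consistency with Table~\ref{tab:LB}; the binding constraint is entirely the strength-of-regularization bound.
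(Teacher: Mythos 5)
Your proposal is correct and follows the same route as the paper: the upper bound comes from the explicit teaching set of Proposition~\ref{thm:hsvm}, and the matching lower bound comes from instantiating LB2 with $A=I$ and the hinge loss, whose supremum evaluates to $1/\|\btheta^*\|^2$. Your case analysis of the supremum via the substitution $t=\alpha y$ is just a more explicit version of the computation the paper compresses into one chain of equalities.
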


\begin{proof}
We show this number matches LB2.
Let $A=I$, $\ell(a,b) = \max(1-ab, 0)$, and consider the denominator of~\eqref{eq:thm:3}:
\begin{align}
\nonumber
\sup_{\alpha\in\R, y\in \mathcal{Y}, g\in -\partial_1 \ell(\alpha \|\btheta^*\|^2, y)} \alpha g & = \sup_{\alpha, y\in \{-1, 1\}, g\in y\I(y\alpha \|\btheta^*\|^2)} \alpha g
\\ \nonumber & = 
\sup_{\alpha, g\in \I(\alpha \|\btheta^*\|^2)} \alpha g
\\ & \nonumber
= {1\over \|\btheta^*\|^2}
\end{align}
where the first equality is due to $\partial_1 \ell(a, b) = -b\I(ab)$. Therefore, $LB2=\left\lceil {\lambda \|\btheta^*\|^2}\right\rceil$ which matches the construction in~\eqref{eq:hsvm:construction}.
\end{proof}

\textbf{Homogeneous logistic regression} solves the problem:
\begin{align}
\min_{\btheta\in \R^d}\quad \sum_{i=1}^n\log (1+\exp\{-y_i\x_i^\top \btheta\}) + {\lambda \over 2}\|\btheta\|^2
\label{eq:hlogr}
\end{align}
The situation is similar to homogeneous SVM.  However, due to the negative log likelihood term we have a coefficient defined by the Lambert W function~\cite{Corless1996Lambert}, which we denote by $\W$.
Recall the defining equation for Lambert W function is $\W(x) e^{\W(x)} = x$.
We further define
\begin{equation}
\tau_{\max}:= \max_{t}\frac{t}{1+e^t} = \W(1/e) \approx 0.2785.
\end{equation}
For any value $a\leq \tau_{\max}$, we define $\tau^{-1}(a)$ as the solution to
$a = {t\over 1 + e^t}$.
By using the Lambert W function $\tau^{-1}(a)$ can be expressed as $\tau^{-1}(a) \equiv a - \W(-ae^a)$.

\begin{prop} \label{thm:hlogr}
Given any target model $\btheta^* \neq 0$, the following is a teaching set for homogeneous logistic regression~\eqref{eq:hlogr}.  There are
$n = \left\lceil \frac{\lambda \|\btheta^*\|^2}{\tau_{\max}} \right\rceil$
identical training items, each takes the form
\begin{equation}
\x_i = \tau^{-1}\left(\lambda \|\btheta^*\|^2\left\lceil \frac{\lambda \|\btheta^*\|^2}{\tau_{\max}} \right\rceil^{-1}\right)\frac{\btheta^*}{\|\btheta^*\|^2}, 
\quad
y_i =  1.
\label{eq:hlogr:construction}
\end{equation}
\end{prop}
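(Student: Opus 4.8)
The plan is to mirror the proof of Proposition~\ref{thm:hsvm}: the objective in~\eqref{eq:hlogr} is strongly convex (the logistic loss is convex and the regularizer $\frac{\lambda}{2}\|\btheta\|^2$ is strongly convex), so it admits a unique minimizer, and it therefore suffices to verify that the construction~\eqref{eq:hlogr:construction} makes $\btheta^*$ satisfy the first-order optimality condition $\nabla f(\btheta^*)={\bf 0}$. Because the logistic loss is smooth, this stationarity condition is both necessary and sufficient for optimality, and uniqueness is inherited for free from strong convexity. Thus the main content is a single gradient computation, plus one check that the coefficient $\tau^{-1}$ is applied within its valid range.

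First I would differentiate one logistic term, obtaining $\nabla_{\btheta} \log(1+\exp\{-y\x^\top\btheta\}) = \frac{-y\x}{1+\exp\{y\x^\top\btheta\}}$. Summing over the $n$ identical items (each with $y_i=1$ and the same $\x_i=\x$) and adding the gradient $\lambda\btheta$ of the regularizer yields the stationarity condition
\begin{equation*}
-n\frac{\x}{1+\exp\{\x^\top\btheta^*\}} + \lambda\btheta^* = {\bf 0}.
\end{equation*}
Next I would substitute the construction. Writing $c$ for the scalar coefficient $\tau^{-1}\!\left(\lambda\|\btheta^*\|^2\lceil\lambda\|\btheta^*\|^2/\tau_{\max}\rceil^{-1}\right)$, so that $\x=c\,\btheta^*/\|\btheta^*\|^2$, the key simplification is that the item is aligned with the target: $\x^\top\btheta^*=c$. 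The stationarity condition then points entirely along $\btheta^*$, and after cancelling $\btheta^*\neq{\bf 0}$ it collapses to the scalar equation $\lambda\|\btheta^*\|^2 = n\cdot\frac{c}{1+e^c}$. Recognizing $\frac{c}{1+e^c}=\tau(c)$ with $\tau(t):=t/(1+e^t)$, this is precisely $\tau(c)=\lambda\|\btheta^*\|^2/n$, which holds by the very definition of $\tau^{-1}$ and the choice $c=\tau^{-1}(\lambda\|\btheta^*\|^2/n)$.

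The one genuine subtlety — the step I would treat most carefully — is well-definedness of $\tau^{-1}$, which exists only when its argument does not exceed $\tau_{\max}=\max_t t/(1+e^t)$. This is exactly why the teaching set uses $n=\lceil\lambda\|\btheta^*\|^2/\tau_{\max}\rceil$ items rather than a single one: the ceiling forces $n\ge\lambda\|\btheta^*\|^2/\tau_{\max}$, hence $\lambda\|\btheta^*\|^2/n\le\tau_{\max}$, placing the argument inside the valid range so that the Lambert-$W$ expression $\tau^{-1}(a)\equiv a-\W(-ae^a)$ returns a real value. I would also remark that $\tau$ is not injective on $(0,\tau_{\max})$, so $\tau(c)=\lambda\|\btheta^*\|^2/n$ has two positive roots and either yields a valid teaching set; the construction simply pins down one of them via the stated branch of $\W$. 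With stationarity verified and uniqueness supplied by strong convexity, the construction~\eqref{eq:hlogr:construction} is a teaching set.
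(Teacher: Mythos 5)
Your proposal is correct and follows essentially the same route as the paper's proof: verify the stationarity condition $\nabla f(\btheta^*)=\mathbf{0}$ by direct gradient computation using $\x^\top\btheta^*=c$ and the identity $a=\tau^{-1}(a)/(1+e^{\tau^{-1}(a)})$, check that the ceiling in $n$ keeps the argument of $\tau^{-1}$ at most $\tau_{\max}$, and obtain uniqueness from strong convexity. Your additional remark about the two roots of $\tau(c)=\lambda\|\btheta^*\|^2/n$ is a fair observation not made in the paper, but it does not change the argument.
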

\begin{proof}
We first verify that $\btheta^*$ is a solution to \eqref{eq:hlogr} based on the teaching set construction in \eqref{eq:hlogr:construction}. We only need to verify the gradient of \eqref{eq:hlogr} is zero. Computing the gradient of \eqref{eq:hlogr}, we have
\begin{align*}
& \sum_{i=1}^n \frac{-y_i\x_i}{1+ \exp\{y_i\x_i^\top \btheta^*\}} + \lambda \btheta^* \\
= & -n \frac{\x_i}{1+\exp\left\{\tau^{-1}\left(\lambda \|\btheta^*\|^2\left\lceil \frac{\lambda \|\btheta^*\|^2}{\tau_{\max}} \right\rceil^{-1}\right)\right\}} + \lambda \btheta^*
\\ = &
-n \frac{\tau^{-1}\left(\lambda \|\btheta^*\|^2\left\lceil \frac{\lambda \|\btheta^*\|^2}{\tau_{\max}} \right\rceil^{-1}\right) }{1+\exp\left\{\tau^{-1}\left(\lambda \|\btheta^*\|^2\left\lceil \frac{\lambda \|\btheta^*\|^2}{\tau_{\max}} \right\rceil^{-1}\right)\right\}}\frac{\btheta^*}{\|\btheta^*\|^2} + \lambda \btheta^*
\\ = &
-n\lambda \|\btheta^*\|^2\left\lceil \frac{\lambda \|\btheta^*\|^2}{\tau_{\max}} \right\rceil^{-1}\frac{\btheta^*}{\|\btheta^*\|^2} + \lambda \btheta^*
\\ = &
{\bf 0},
\end{align*}
where the third equality uses the fact $\lambda \|\btheta^*\|^2\left\lceil \frac{\lambda \|\btheta^*\|^2}{\tau_{\max}} \right\rceil^{-1}\leq \tau_{\max}$ and 
the property $a = \frac{\tau^{-1}(a)}{1+e^{\tau^{-1}(a)}}$. 
The strong convexity of \eqref{eq:hlogr} automatically implies uniqueness. 
\end{proof}

\begin{cor}
\label{cor:hlog}
The teaching dimension $TD(\btheta^*, \A_{log}^{hom})=
\left\lceil \frac{\lambda \|\btheta^*\|^2}{\tau_{\max}} \right\rceil$
for homogeneous logistic regression and target $\btheta^*\neq \bf 0$. 
\end{cor}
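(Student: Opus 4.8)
The plan is to sandwich $TD(\btheta^*, \A_{log}^{hom})$ between matching upper and lower bounds, exactly as was done for homogeneous SVM in Corollary~\ref{cor:hsvm}. The upper bound is already in hand: Proposition~\ref{thm:hlogr} exhibits a teaching set of size $n = \left\lceil \frac{\lambda \|\btheta^*\|^2}{\tau_{\max}} \right\rceil$ that uniquely recovers $\btheta^*$ from~\eqref{eq:hlogr}, so by definition $TD(\btheta^*, \A_{log}^{hom}) \le \left\lceil \frac{\lambda \|\btheta^*\|^2}{\tau_{\max}} \right\rceil$. It therefore remains to prove the same quantity is a lower bound, which I would obtain by instantiating LB2 from Theorem~\ref{thm:LB2} with $A = I$ (so $A$ has full rank and $\|\cdot\|_A = \|\cdot\|$), the nonzero target $\btheta^*$, loss $\ell(a,b) = \log(1 + e^{-ab})$, and output space $\mathcal{Y} = \{-1, 1\}$.

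The heart of the argument is evaluating the denominator $\sup_{\alpha \in \R,\, y \in \mathcal{Y},\, g \in -\partial_1 \ell(\alpha \|\btheta^*\|^2, y)} \alpha g$ of~\eqref{eq:thm:3}. First I would compute $\partial_1 \ell(a, y) = \frac{-y}{1 + e^{ya}}$, so that $-\partial_1 \ell(\alpha \|\btheta^*\|^2, y) = \frac{y}{1 + e^{y \alpha \|\btheta^*\|^2}}$; the loss is smooth, so this is a singleton and the supremum over $g$ is trivial. The objective then becomes $\sup_{\alpha, y} \frac{\alpha y}{1 + e^{y \alpha \|\btheta^*\|^2}}$. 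The key move is the change of variables $t := y \alpha \|\btheta^*\|^2$: since $y^2 = 1$ this gives $\alpha y = \frac{t}{\|\btheta^*\|^2}$, collapsing the two free parameters $\alpha, y$ into the single variable $t$. The objective simplifies to $\frac{1}{\|\btheta^*\|^2} \cdot \frac{t}{1 + e^t}$, whose supremum over $t$ is exactly $\frac{\tau_{\max}}{\|\btheta^*\|^2}$ by the definition $\tau_{\max} = \max_t \frac{t}{1 + e^t}$.

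Substituting this denominator back into LB2 yields $LB2 = \left\lceil \frac{\lambda}{\tau_{\max} / \|\btheta^*\|^2} \right\rceil = \left\lceil \frac{\lambda \|\btheta^*\|^2}{\tau_{\max}} \right\rceil$, which matches the teaching set size of Proposition~\ref{thm:hlogr} and completes the proof. I expect the only delicate point to be the reparametrization: one must verify that letting $\alpha$ range over all of $\R$ together with $y \in \{-1,1\}$ makes $t$ cover all of $\R$ (so no part of the curve $t/(1+e^t)$ is missed), and it is reassuring that the maximizing value $t = \tau^{-1}(\tau_{\max})$ is precisely the quantity that drives the construction~\eqref{eq:hlogr:construction}. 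Everything else is routine differentiation and ceiling bookkeeping of the same flavor already carried out in Corollary~\ref{cor:hsvm}.
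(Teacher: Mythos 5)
Your proposal is correct and follows essentially the same route as the paper: the upper bound comes from the teaching set of Proposition~\ref{thm:hlogr}, and the matching lower bound comes from instantiating LB2 with $A=I$ and the logistic loss, where the substitution $t = y\alpha\|\btheta^*\|^2$ reduces the supremum to $\tau_{\max}/\|\btheta^*\|^2$ exactly as in the paper's calculation. The derivative computation and the ceiling bookkeeping are all as the paper has them, so there is nothing to add.
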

\begin{proof}
We show that the number matches LB2.
In~\eqref{eq:thm:3} let $A=I$ and $\ell(a, b)=\log(1+\exp\{-ab\})$. The denominator of LB2 is:
\begin{align*}
\sup_{\alpha\in\R, y\in \mathcal{Y}, g\in - \partial_1 \ell(\alpha \|\btheta^*\|^2, y)} \alpha g = & \sup_{\alpha, y\in \{-1, 1\}, g= y(1+\exp\{y\alpha \|\btheta^*\|^2\})^{-1}} \alpha g
\\ = & 
\sup_{\alpha, g= (1+\exp\{\alpha \|\btheta^*\|^2\})^{-1}} \alpha g
\\ = &
\sup_{\alpha} \frac{\alpha}{1+\exp\{\alpha \|\btheta^*\|^2\}}
\\ = &
\|\btheta^*\|^{-2}\sup_{t} \frac{t}{1+\exp\{t\}}
\\ = &
\frac{\tau_{\max}}{\|\btheta^*\|^2},
\end{align*}
which implies $LB2 = \left\lceil \frac{\lambda \|\btheta^*\|^2}{\tau_{\max}} \right\rceil$.
\end{proof}

\subsection{The Teaching Dimension $TD(\btheta^*, \A_{opt})$ of Three Inhomogeneous Learners}
\label{sec:UBinhomogeneous}

Inhomogeneous learners are defined by $\btheta = [\w; b]$ where the weight vector $\w \in \R^d$ and the scalar offset $b \in \R$.  The offset $b$ is not regularized.
Similar to the previous section, we need to instantiate the teaching dimension lower bounds and design the teaching sets. 
We show that the size of our teaching set exactly matches the lower bound for inhomogeneous ridge regression, and differs from the lower bound of inhomogeneous SVM and logistic regression by at most one due to rounding.
Therefore, up to rounding we also establish the teaching dimension for these inhomogeneous learners.

\textbf{Inhomogeneous ridge regression} solves the problem:
\begin{align}
\min_{\w\in \R^d, b\in \R}\quad \sum_{i=1}^n {1\over 2} (\x_i^\top \w+b - y_i)^2 + {\lambda \over 2}\|\w\|^2
\label{eq:ilr}
\end{align}

\begin{prop} \label{thm:ilr} 
Given any target model $[\w^*; b^*]$, if $\w^*={\bf 0}$ ($b^*$ can be an arbitrary value), the following is a teaching set for inhomogeneous ridge regression~\eqref{eq:ilr} with $n=1$:
\begin{equation}
\x_1 =  {\bf 0},
\quad
y_1 =  b^*.
\label{eq:thm:ilr:1}
\end{equation}
If $\w^*\neq {\bf 0}$, any $n=2$ items satisfying the following are a teaching set for $a \neq 0$:
\begin{equation}
\x_1 - \x_2=  a\w^*,
\quad
y_1 = \x_1^\top \w^* + b^* + \frac{\lambda}{a},
\quad
y_2 =  y_1 - a \|\w^*\|^2 - 2{\lambda \over a}. 
\label{eq:thm:ilr:2}
\end{equation}
\end{prop}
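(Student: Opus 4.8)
The plan is to reuse the two-part template that the homogeneous cases already follow: first verify that the proposed $[\w^*; b^*]$ satisfies the optimality condition for~\eqref{eq:ilr}, and then argue that this stationary point is the unique minimizer. Since the objective $f$ in~\eqref{eq:ilr} is smooth, the KKT condition is simply the pair of gradient equations $\nabla_\w f = {\bf 0}$ and $\nabla_b f = {\bf 0}$, where $\nabla_\w f = \sum_i (\x_i^\top \w + b - y_i)\x_i + \lambda \w$ and $\nabla_b f = \sum_i (\x_i^\top \w + b - y_i)$. I would handle the two stated cases separately.

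For the case $\w^* = {\bf 0}$, substituting the single item $\x_1 = {\bf 0}$, $y_1 = b^*$ makes the residual $\x_1^\top \w^* + b^* - y_1$ vanish, so both gradient equations hold immediately. For the case $\w^* \neq {\bf 0}$, the key quantities are the two residuals $r_i := \x_i^\top \w^* + b^* - y_i$. Plugging the construction~\eqref{eq:thm:ilr:2} into these, I expect to find $r_1 = -\lambda/a$ and $r_2 = +\lambda/a$, where computing $r_2$ uses $\x_2 = \x_1 - a\w^*$ to rewrite $\x_2^\top \w^*$. Then $\nabla_b f = r_1 + r_2 = 0$, and $\nabla_\w f = r_1 \x_1 + r_2 \x_2 + \lambda \w^* = (\lambda/a)(\x_2 - \x_1) + \lambda\w^* = -\lambda\w^* + \lambda\w^* = {\bf 0}$, again invoking $\x_1 - \x_2 = a\w^*$. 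This portion is routine algebra once the residuals are identified.

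The step I expect to require the most care is the uniqueness argument. Unlike the homogeneous learners, I cannot simply cite strong convexity, because the bias $b$ is unregularized: the $\frac{\lambda}{2}\|\w\|^2$ term alone makes $f$ strongly convex only in the $\w$-block, not jointly in $[\w; b]$. Instead I would show the full Hessian is positive definite directly. Writing the Hessian in the $(\w, b)$ variables and evaluating its quadratic form on an arbitrary direction $(\u, c)$ yields $\sum_{i} (\x_i^\top \u + c)^2 + \lambda\|\u\|^2$. This is strictly positive whenever $(\u, c) \neq ({\bf 0}, 0)$ and $n \geq 1$: if $\u \neq {\bf 0}$ then $\lambda\|\u\|^2 > 0$, while if $\u = {\bf 0}$ the sum collapses to $n c^2$, which forces $c = 0$. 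Hence $f$ is strictly convex for any teaching set with at least one item, and since both constructions use $n \geq 1$, the verified stationary point $[\w^*; b^*]$ is the unique global minimizer. This establishes both constructions as valid teaching sets.
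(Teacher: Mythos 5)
Your proposal is correct and follows essentially the same route as the paper: verify the two gradient (KKT) equations via the residuals $r_1=-\lambda/a$, $r_2=+\lambda/a$, then conclude uniqueness from convexity. The only difference is that you justify strict convexity by an explicit Hessian computation in the joint variable $[\w;b]$, whereas the paper simply asserts strong convexity of~\eqref{eq:ilr} for $n=1$ and $n=2$; your version fills in the detail that matters precisely because $b$ is unregularized.
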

\begin{proof}
We first prove the case for $\w^* = {\bf 0}$. We can verify that the KKT condition is satisfied by designing $\x_1$ and $y_1$ as in~\eqref{eq:thm:ilr:1}:
\begin{align*}
(\x_1^\top \w^* + b^* - y_1)\x_1 + \lambda \w^* = & {\bf 0}
\\
\x_1^\top \w^* + b^* - y_1 = & 0.
\end{align*}
The uniqueness of $[\w^*; b^*]$ is indicated by the strong convexity of \eqref{eq:ilr} when $n=1$.

We then prove the case for $\w^* \neq {\bf 0}$. With simple algebra, we can verify the KKT condition holds via the construction in \eqref{eq:thm:ilr:2}: 
\begin{align*}
(\x_1^\top \w^* + b^* - y_1)\x_1 + (\x_2^\top \w^* + b^* - y_2)\x_2 + \lambda \w^* =&{\bf 0} 
\\
(\x_1^\top \w^* + b^* - y_1) + (\x_2^\top \w^* + b^* - y_2) = &0.
\end{align*}
Similarly, the uniqueness is implied by the strong convexity of \eqref{eq:ilr} when $n=2$. 
\end{proof}

\begin{cor}
\label{cor:iridge}
The teaching dimension 
for inhomogeneous ridge regression with target $\btheta^* = [\w^*; b^*]$ is
$TD(\btheta^*, \A_{ridge}^{inh})=1$ if target $\w^* = \bf 0$, or 
$TD(\btheta^*, \A_{ridge}^{inh})=2$ if $\w^* \neq \bf 0$, regardless of the target offset $b^*$.
\end{cor}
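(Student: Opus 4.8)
The plan is to sandwich $TD(\btheta^*,\A_{ridge}^{inh})$ between a matching upper and lower bound. The upper bound is immediate from Proposition~\ref{thm:ilr}: that proposition already exhibits a valid teaching set of size $1$ when $\w^*={\bf 0}$ and of size $2$ when $\w^*\neq{\bf 0}$, so by the definition of teaching dimension we have $TD(\btheta^*,\A_{ridge}^{inh})\le 1$ and $\le 2$ in the two cases, respectively. It remains to prove the reverse inequalities, and the natural tool is the degree-of-freedom bound LB1 from Theorem~\ref{thm:LB1}.

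The key conceptual step is to recognize that the inhomogeneous problem \eqref{eq:ilr} is an instance of the general problem \eqref{eq:main} once we augment the parameter and the data. Writing $\btheta=[\w;b]\in\R^{d+1}$ and appending a constant feature so that the inner product becomes $[\x_i;1]^\top[\w;b]=\x_i^\top\w+b$, the regularizer $\frac{\lambda}{2}\|\w\|^2$ corresponds to the block-diagonal Mahalanobis matrix $\tilde A=\bigl[\begin{smallmatrix}I_d & {\bf 0}\\ {\bf 0}^\top & 0\end{smallmatrix}\bigr]$ acting on $\R^{d+1}$, since $\btheta^\top\tilde A\btheta=\|\w\|^2$. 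Because the bias coordinate is unregularized, $\tilde A$ is rank-deficient: $\text{Rank}(\tilde A)=d$ while the ambient dimension is now $d+1$.

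I would then apply Theorem~\ref{thm:LB1} with $d$ replaced by the augmented dimension $d+1$ and $\text{Rank}(A)$ replaced by $\text{Rank}(\tilde A)=d$. The theorem's case split hinges on whether $\tilde A\btheta^*={\bf 0}$, and since $\tilde A\btheta^*=[\w^*;0]$, this condition is exactly $\w^*={\bf 0}$. Hence when $\w^*\neq{\bf 0}$ we obtain $LB1=(d+1)-d+1=2$, and when $\w^*={\bf 0}$ we obtain $LB1=(d+1)-d=1$. In each case the lower bound equals the size of the teaching set constructed in Proposition~\ref{thm:ilr}, so the two bounds coincide and pin down $TD(\btheta^*,\A_{ridge}^{inh})$ exactly. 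The whole argument is uniform in $b^*$, which accounts for the ``regardless of the target offset'' clause.

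I expect the only real subtlety to be the bookkeeping of the augmented dimension: one must take the degree-of-freedom count over $\R^{d+1}$ rather than $\R^d$, and observe that it is precisely the unregularized bias that lowers $\text{Rank}(\tilde A)$ to $d$. A minor point worth flagging is that even in the $\w^*={\bf 0}$ subcase the bound is $1$ and not $0$: the rank deficiency of $\tilde A$ rules out the trivial $n^*=0$ situation identified at the start of the proof of Theorem~\ref{thm:LB1}, reflecting that at least one training item is still needed to fix the bias $b^*$. (Here LB2 and LB3 contribute nothing, being $0$ and inapplicable respectively for ridge, so LB1 is the binding constraint.)
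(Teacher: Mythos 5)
Your argument is correct and matches the paper's own proof essentially step for step: both treat the inhomogeneous problem as the general problem~\eqref{eq:main} in $\R^{d+1}$ with the rank-$d$ padded regularization matrix, instantiate LB1 via the $A\btheta^*=[\w^*;0]$ case split, and match the resulting bounds of $1$ and $2$ against the teaching sets of Proposition~\ref{thm:ilr}. No gaps.
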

\begin{proof}
We match the lower bound LB1 in~\eqref{eq:thm:1}.
Note $\btheta^*=[\w^*; b^*] \in \R^{d+1}$, and $A$ in this case is a $(d+1)\times (d+1)$ matrix with the $d\times d$ identity matrix $I_d$ padded with one additional row and column of zeros for the offset.
Therefore $Rank(A)=Rank(I_{d})=d$.  When $\w^*=\bf 0$, $A\btheta^*=\bf 0$ and $LB1=(d+1)-Rank(A)=1$.  When $\w^*\neq \bf 0$, $A\btheta^* \neq \bf0$ and $LB1=(d+1)-Rank(A)+1=2$.
These lower bounds match the teaching set sizes in~\eqref{eq:thm:ilr:1} and~\eqref{eq:thm:ilr:2}, respectively.
\end{proof}


\textbf{Inhomogeneous SVM} solves the problem:
\begin{align}
\min_{\w\in \R^d, b\in \R}\quad \sum_{i=1}^n \max(1-y_i(\x_i^\top \w + b),~0) + {\lambda \over 2}\|\w\|^2
\label{eq:isvm}
\end{align}
\begin{prop} \label{thm:isvm}
Given any target model $[\w^*; b^*]$ with $\w^* \neq 0$, the following is a teaching set for inhomogeneous SVM~\eqref{eq:isvm}.
We need 
$n=2\left\lceil {\lambda\|\w^*\|^2 \over 2} \right\rceil$ training items, half of which are identical positive items 
$\x_i = \x_+,\quad y_i = 1,\quad \forall i \in \left\{1, \cdots, {n\over 2} \right\}$ and
half identical negative items
$\x_i = \x_-,\quad y_i = -1,\quad \forall i \in \left\{{n\over 2}+1, \cdots, n \right\}$.
$\x_+$ and $\x_-$ can be designed as any vectors satisfying 
\begin{equation}
\x_+ ^\top \w^*  =  1 - b^* ,
\quad
\x_- =  \x_+ -  \frac{2\w^*} {\|\w^*\|^2}.
\label{eq:isvm:construction}
\end{equation}
\end{prop}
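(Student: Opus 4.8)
The plan is to show the constructed set is a teaching set in the usual two steps: first verify that $[\w^*; b^*]$ satisfies the KKT (subgradient) optimality condition for~\eqref{eq:isvm}, and then argue that this minimizer is unique. Since the loss is the hinge loss, I would reuse the indicator subgradient notation $\I(\cdot)$ from~\eqref{eq:def:indictor}: writing $m_i := y_i(\x_i^\top \w^* + b^*)$ for the margin of item $i$, the subgradient of $\max(1-m_i,0)$ with respect to $m_i$ is $-\I(m_i)$.

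First I would compute the margins at the target. For a positive item, $m_i = \x_+^\top \w^* + b^* = (1-b^*) + b^* = 1$ by the first equation in~\eqref{eq:isvm:construction}. For a negative item, the definition $\x_- = \x_+ - 2\w^*/\|\w^*\|^2$ gives $\x_-^\top \w^* = (1-b^*) - 2 = -1-b^*$, so $m_i = -(\x_-^\top\w^* + b^*) = 1$ as well. Hence every item sits exactly on the margin and $\I(m_i) = [0,1]$, giving freedom to pick any subgradient value in $[0,1]$. Next I would verify KKT. Write $k := \lceil \lambda\|\w^*\|^2/2\rceil$, so that $n = 2k$ with $k$ positive and $k$ negative items, and choose the common subgradient value $g := \lambda\|\w^*\|^2/(2k)$; note $g \in (0,1]$ because $k \geq \lambda\|\w^*\|^2/2$. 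The stationarity condition in $\w$ reads $\lambda\w^* = \sum_i g_i y_i \x_i = kg(\x_+ - \x_-) = kg \cdot 2\w^*/\|\w^*\|^2$, which holds exactly for this $g$. The stationarity condition in $b$ reads $0 = \sum_i g_i y_i = kg - kg = 0$, which holds because the positive and negative items are balanced. Thus $[\w^*; b^*]$ is a minimizer.

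The hard part is uniqueness. Unlike the homogeneous case, \eqref{eq:isvm} is \emph{not} strongly convex, since $b$ is unregularized and the hinge loss is only piecewise linear, so I cannot invoke strong convexity directly as was done for homogeneous SVM. I would therefore split the argument into two stages. In the first stage, I observe that the objective is strongly convex in $\w$ alone (coming from the $\frac{\lambda}{2}\|\w\|^2$ term) while being jointly convex in $(\w, b)$; a standard midpoint argument then shows that any two minimizers must share the same $\w$-coordinate, so every minimizer has $\w = \w^*$. In the second stage, I fix $\w = \w^*$ and minimize over $b$: the positive terms contribute $k\max(b^*-b,0)$ and the negative terms contribute $k\max(b-b^*,0)$, whose sum equals $k\,|b-b^*|$ up to an additive constant. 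This V-shaped function is uniquely minimized at $b = b^*$.

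Combining the two stages yields that $[\w^*; b^*]$ is the unique solution of~\eqref{eq:isvm}, so the construction is indeed a teaching set, and its size $n = 2\lceil \lambda\|\w^*\|^2/2\rceil$ is an upper bound on $TD(\btheta^*, \A_{opt})$. I expect the uniqueness step to be the genuine obstacle, precisely because the lack of regularization on $b$ breaks the one-line strong-convexity argument available for the homogeneous learners; the balanced placement of positive and negative items on opposite sides of the margin is exactly what collapses the $b$-subproblem to $|b-b^*|$ and restores uniqueness.
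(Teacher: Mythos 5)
Your KKT verification is correct and essentially identical to the paper's: both place every item exactly on the margin so that $\I(1)=[0,1]$ provides the needed subgradient freedom, and the balanced positive/negative counts kill the bias component of the stationarity condition. Where you genuinely diverge is in the uniqueness argument, and your route is both correct and considerably simpler. The paper proceeds by contradiction with a case split on $\|\bar\w\|$ versus $\|\w^*\|$: the case $\|\bar\w\|=\|\w^*\|$ is handled by a direct computation with the quantities $\Delta_\pm$, while the case $\|\bar\w\|<\|\w^*\|$ requires constructing the scaled candidate $[t\w^*;tb^*]$, showing it solves an auxiliary norm-constrained problem via a normal-cone optimality condition, and then deriving a contradiction from a second evaluation of the objective. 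Your two-stage argument --- the midpoint (strict convexity of $\|\cdot\|^2$ in $\w$) argument showing every minimizer shares the $\w$-coordinate $\w^*$, followed by the observation that with $\w=\w^*$ fixed the loss reduces exactly to $k\,|b-b^*|$ (in fact with no additive constant), which has the unique minimizer $b=b^*$ --- reaches the same conclusion while avoiding the auxiliary constrained problem entirely. The first stage is the standard fact that the weight vector of an $\ell_2$-regularized convex-loss minimizer is always unique even when the unregularized bias is not, and the second stage isolates precisely why the symmetric two-sided construction pins down $b$. The only thing you leave implicit is the one-line computation behind the midpoint argument (convexity of the loss plus $\bigl\|\tfrac{\w_1+\w_2}{2}\bigr\|^2=\tfrac12\|\w_1\|^2+\tfrac12\|\w_2\|^2-\tfrac14\|\w_1-\w_2\|^2$), but that is indeed standard and fills in without difficulty; your proof is complete and, if anything, cleaner than the published one.
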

\begin{proof}
Unlike in previous learners (including homogeneous SVM), we no longer have strong convexity w.r.t. $b$.  In order to prove that~\eqref{eq:isvm:construction} is a teaching set, we need to verify the KKT condition and verify solution uniqueness.

We first verify the KKT condition to show that the solution under~\eqref{eq:isvm:construction} includes the target model $[\w^*; b^*]$. From \eqref{eq:isvm:construction}, we have
\begin{equation}
\label{eq:starmargin}
\x_+ ^\top \w^* + b^* =  1, \quad \x_-^\top \w^* + b^* =  -1.
\end{equation}
Applying them to the KKT condition and using the notation in~\eqref{eq:def:indictor}
we obtain
\begin{align*}
& -{n \over 2}\I(\x_+^\top \w^* + b^*)
\left[\begin{matrix}
\x_+ \\
1
\end{matrix}\right]
+
{n \over 2}\I(-\x_-^\top \w^* - b^*)
\left[\begin{matrix}
\x_- \\
1
\end{matrix}\right]
+
\left[\begin{matrix}
\lambda \w^* \\
0
\end{matrix}\right]
\\ = & 
-{n \over 2}\I(1)
\left[\begin{matrix}
\x_+ \\
1
\end{matrix}\right]
+
{n \over 2}\I(1)
\left[\begin{matrix}
\x_- \\
1
\end{matrix}\right]
+
\left[\begin{matrix}
\lambda \w^* \\
0
\end{matrix}\right]
\\
\supset &
{n \over 2}\I(1)
\left[\begin{matrix}
\x_- -\x_+ \\
0
\end{matrix}\right]
+
\left[\begin{matrix}
\lambda \w^* \\
0
\end{matrix}\right] \quad \mbox{setting the last dimension to 0}
\\
= &
\I(1)
\left[\begin{matrix}
-\frac{n}{\|\w^*\|^2}\w^* \\
0
\end{matrix}\right]
+
\left[\begin{matrix}
\lambda \w^* \\
0
\end{matrix}\right] \quad \mbox{applying~\eqref{eq:isvm:construction}}
\\
\supseteq &
\I(1)
\left[\begin{matrix}
-\lambda\w^* \\
0
\end{matrix}\right]
+
\left[\begin{matrix}
\lambda \w^* \\
0
\end{matrix}\right] \quad \mbox{observing $n\geq \lambda \|\w^*\|^2$}
\\
\ni &
{\bf 0}.
\end{align*}
It proves that $[\w^*; b^*]$ solves \eqref{eq:isvm} by our teaching set construction.

Next we prove uniqueness by contradiction. We use $f(\w, b)$ to denote the objective function in \eqref{eq:isvm} under the teaching set.
It is easy to verify that $f(\w^*, b^*) = {\lambda \over 2}\|\w^*\|^2$.
 Assume that there exists another solution $[\bar\w; \bar{b}]$ different from $[\w^*; b^*]$. We can obtain $\|\bar{\w}\|^2 \leq \|\w^*\|^2$ due to
\[
{\lambda \over 2}\|\w^*\|^2 = f(\w^*, b^*) = f(\bar{\w}, \bar{b}) \geq {\lambda \over 2}\|\bar{\w}\|^2.
\]
The second equality is due to $[\bar\w; \bar{b}]$ being a solution; the inequality is due to whole-part relationship.
Therefore, there are only two possibilities for the norm of $\bar{\w}$: $\|\bar{\w}\| = \|\w^*\|$ or $\|\bar{\w}\| = t\|\w^*\|$ for some $0\leq t<1$. Next we will show that both cases are impossible. 

(Case 1) For the case $\|\bar{\w}\| = \|\w^*\|$, we have
\begin{align*}
\nonumber
f(\bar{\w}, \bar{b}) = & {n\over 2} \max \left(1-(\x_+^\top \bar{\w} + \bar{b}), 0 \right) + {n\over 2} \max \left(1+(\x_-^\top \bar{\w} + \bar{b}), 0 \right) + {\lambda \over 2}\|\bar{\w}\|^2 
\\ \nonumber = & 
{n\over 2} \max \left(\underbrace{\x_+^\top (\w^*-\bar{\w}) + (b^*-\bar{b})}_{=:\Delta_+}, 0 \right) + {n\over 2} \max \left(\underbrace{-\x_-^\top (\w^*-\bar{\w}) - (b^*-\bar{b})}_{=:\Delta_-}, 0 \right) 
\\ & 
+ {\lambda \over 2}\|\w^*\|^2
\label{eq:proof:isvm:1}
\\ \nonumber = &
{n\over 2} \max \left(\Delta_+, 0 \right) + {n\over 2} \max \left(\Delta_-, 0 \right) + f(\w^*, b^*).
\end{align*}
From $f(\bar{\w}, \bar{b}) = f(\w^*, b^*)$, it follows $\Delta_+\leq 0$ and $\Delta_-\leq 0$. Since 
\[
 0 \ge \Delta_+ + \Delta_- = (\x_+-\x_-)^\top (\w^* - \bar{\w}) = \frac{2(\w^*)^\top (\w^* - \bar{\w})}{\|\w^*\|^2} = 2 - 2\frac{\bar{\w}^\top \w^*}{\|\w^*\|^2},
\]
we have ${\bar{\w}^\top \w^*}\geq {\|\w^*\|^2}$. 
But because $\|\bar{\w}\| = \|\w^*\|$, we must have $\bar{\w} = \w^*$. 
Applying this new observation to $\Delta_+\leq 0$ and $\Delta_- \leq 0$, we obtain $b^* = \bar{b}$. It means that $[\w^*; b^*] = [\bar{\w}; \bar{b}]$, contradicting our assumption $[\w^*; b^*] \neq [\bar{\w}; \bar{b}]$.

(Case 2) 
Next we turn to the case $\|\bar{\w}\| = t\|\w^*\|$ for some $t\in [0, 1)$.  Recall our assumption that $[\bar{\w}; \bar{b}]$ solves \eqref{eq:isvm}. Then it follows that the following specific construction $[\hat{\w}, \hat{b}]$ solves \eqref{eq:isvm} as well:
\begin{equation}
\label{eq:tstar}
\hat{\w} = t \w^*,\quad \hat{b} = t b^*.
\end{equation}

To see this, we consider the following optimization problem:
\begin{equation}
\begin{aligned}
\min_{\w, b}\quad & L(\w, b):= {n \over 2} \max(1-(\x_+^\top \w+b) ,0) + {n \over 2} \max(1+(\x_-^\top \w+b) ,0) 
\\
\text{s.t.}\quad & \|\w\| \leq t\|\w^*\|.
\end{aligned}
\label{eq:proof:isvm:2}
\end{equation}
Since $[\bar{\w}; \bar{b}]$ solves \eqref{eq:isvm}, it is easy to see that $[\bar{\w}; \bar{b}]$ solves \eqref{eq:proof:isvm:2} too, otherwise there exists a solution for \eqref{eq:proof:isvm:2} which gives a lower function value on \eqref{eq:isvm}. Then we can verify that $[\hat{\w}; \hat{b}]$ solves \eqref{eq:proof:isvm:2} as well by showing the following optimality condition holds:
\begin{equation}
\left.
-\left[
\begin{matrix}
\frac{\partial L(\w, b)}{\partial \w}
\\
\frac{\partial L(\w, b)}{\partial b}
\end{matrix}
\right] 
\right\vert_{[\hat{\w}; \hat{b}]}
\cap 
\underbrace{\mathcal{N}_{\|\w\|\leq t\|\w^*\|}(\hat{\w}, \hat{b})}_{\text{Normal cone to the set $\{[\w;b]: \|\w\|\leq t\|\w^*\|\}$ at $[\hat{\w}; \hat{b}]$}}
\neq \emptyset
\end{equation}
Because of~\eqref{eq:starmargin} and~\eqref{eq:tstar}, we have $\x_+^\top \hat\w + \hat b = t < 1$.
Thus at $[\hat \w; \hat b]$ the subgradient is 
\begin{equation}
\left.
- \left[
\begin{matrix}
\frac{\partial L(\w, b)}{\partial \w}
\\
\frac{\partial L(\w, b)}{\partial b}
\end{matrix}
\right] 
\right\vert_{[\hat{\w}; \hat{b}]}
=
{n\over 2}\left[
\begin{matrix}
\x_+ - \x_-
\\
{0}
\end{matrix}
\right]
=
\left[
\begin{matrix}
\frac{n\w^*}{\|\w^*\|^2}
\\
{0}
\end{matrix}
\right]
\end{equation}
And the normal cone is
\begin{equation}
\mathcal{N}_{\|\w\|\leq t\|\w^*\|}(\hat\w,\hat{b}) = \left\{s \left[
\begin{matrix}
\w^*
\\
{0}
\end{matrix}
\right]~\Bigg|~s\geq 0 \right\}.
\end{equation}
The intersection is non-empty by choosing $s = {n \over \|\w^*\|^2}$.
Since both $[\hat{\w}; \hat{b}]$ and $[\bar{\w}; \bar{b}]$ solve \eqref{eq:proof:isvm:2}, we have $L(\hat{\w}, \hat{b}) = L(\bar{\w}, \bar{b})$. Together with $\|\hat{\w}\| = \|\bar{\w}\|$, we have 
\[
f(\hat{\w}, \hat{b}) = L(\hat{\w}, \hat{b}) + {\lambda \over 2}\|\hat{\w}\|^2 = f(\bar{\w}, \bar{b}) = f(\w^*, b^*).
\] 
Therefore, we proved that $[\hat{\w}; \hat{b}]$ solves \eqref{eq:isvm} as well. 
To see the contradiction, let us check the function value of $f(\hat{\w}, \hat{b})$ via a different route:
\begin{align*}
f(\hat{\w}, \hat{b}) = & f(t\w^*, tb^*) 
\\ = & 
\sum_{i=1}^{n\over 2} \max \left( 1-t(\x_+^\top \w^* + b^*),0 \right) + \sum_{i=1}^{n\over 2} \max \left( 1+t(\x_-^\top \w^* + b^*), 0 \right) + {\lambda \over 2}\|\w^*\|^2 t^2
\\ = & 
\sum_{i=1}^{n\over 2} \max \left( 1-t, 0 \right) + \sum_{i=1}^{n\over 2} \max \left( 1-t,0 \right) + {\lambda \over 2}\|\w^*\|^2 t^2
\\ = &
n (1-t) - {\lambda \over 2} \|\w^*\|^2 (1-t^2) + {\lambda \over 2} \|\w^*\|^2
\\ \geq &
n (1-t) - {n \over 2} (1-t^2) + {\lambda \over 2} \|\w^*\|^2
\\ = &
{n\over 2} (1-t)^2 + f(\w^*, b^*)
\\ > &
f(\w^*, b^*),
\end{align*}
where the first inequality uses the fact that $n \geq \lambda \|\w^*\|^2$. It contradicts our early assertion $f(\hat{\w}, \hat{b}) = f(\w^*, b^*) $. 
Putting cases 1 and 2 together we prove uniqueness.
\end{proof}

Our construction of the teaching set in~\eqref{eq:isvm:construction} requires 
$n=2\left\lceil {\lambda\|\w^*\|^2 \over 2} \right\rceil$ training items.
This is an upper bound on the teaching dimension.
Meanwhile, we show below that the inhomogeneous SVM lower bound is $LB3=\left\lceil \lambda\|\w^*\|^2 \right\rceil$.
There can be a difference of at most one between the lower and upper bounds, which we call the ``rounding effect.''
We suspect that this small gap is a technicality and not intrinsic.  However, at present we do not have a teaching set construction that bridges this gap.
Therefore, we state the teaching dimension as an interval in the following corollary and leave the precise value as an open question for future research.

\begin{cor}
The teaching dimension 
for inhomogeneous SVM and target $\btheta^* = [\w^*; b^*]$ where $\w^*\neq \bf 0$ is in the interval 
$\left\lceil \lambda\|\w^*\|^2 \right\rceil \le
TD(\btheta^*, \A_{svm}^{inh})
\le 2\left\lceil {\lambda\|\w^*\|^2 \over 2} \right\rceil$.
\end{cor}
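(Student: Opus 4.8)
The plan is to establish the two-sided bound by combining two facts that are essentially already in hand: the upper bound is delivered directly by the explicit construction, while the lower bound requires instantiating the general bound LB3 for the hinge loss. Both directions together give the claimed interval.

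For the upper bound, Proposition~\ref{thm:isvm} exhibits an explicit teaching set for $[\w^*; b^*]$ (with $\w^* \neq {\bf 0}$) consisting of $n = 2\lceil \lambda\|\w^*\|^2/2\rceil$ identical positive and negative items. Since $TD(\btheta^*, \A_{svm}^{inh})$ is by definition the size of the \emph{smallest} teaching set, the mere existence of this particular teaching set immediately yields $TD(\btheta^*, \A_{svm}^{inh}) \le 2\lceil \lambda\|\w^*\|^2/2\rceil$. No further work is needed in this direction.

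For the lower bound, I would invoke Theorem~\ref{thm:classification}. Inhomogeneous SVM is precisely the margin-based learner~\eqref{eq:classification} with $A = I$ and hinge loss $\ell(m) = \max(1-m, 0)$; since $A = I$ has full rank and $\w^* \neq {\bf 0}$, the hypotheses of Theorem~\ref{thm:classification} are met, so $TD(\btheta^*, \A_{svm}^{inh}) \ge LB3$. The only computation is the supremum in the denominator of~\eqref{eq:thm:classification}. Writing the subgradient of the hinge loss in the indicator notation of~\eqref{eq:def:indictor}, i.e. $-\partial \ell(m) = \I(m)$, the denominator becomes $\sup_{\alpha \in \R,\, g \in \I(\alpha\|\w^*\|^2)} \alpha g$, which is exactly the supremum already evaluated in the proof of Corollary~\ref{cor:hsvm} with $\w^*$ in place of $\btheta^*$. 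Its value is $\|\w^*\|^{-2}$, attained at $\alpha = \|\w^*\|^{-2}$ and $g = 1 \in \I(1)$. Substituting back gives $LB3 = \lceil \lambda \|\w^*\|^2\rceil$, the claimed lower bound.

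Putting the two bounds together yields the stated interval. The only point to flag is not a genuine obstacle but a rounding remark: the lower bound $\lceil \lambda\|\w^*\|^2\rceil$ and the upper bound $2\lceil \lambda\|\w^*\|^2/2\rceil$ need not coincide. A short case analysis on the parity of $\lceil \lambda\|\w^*\|^2\rceil$ shows they are equal when this quantity is even and differ by exactly one when it is odd, so the gap is at most one (the ``rounding effect'' discussed after Proposition~\ref{thm:isvm}). Because the corollary asserts only the interval, closing this gap is not required here and is left as the open question noted in the surrounding text.
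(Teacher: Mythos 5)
Your proposal is correct and follows essentially the same route as the paper: the upper bound is read off from Proposition~\ref{thm:isvm}, and the lower bound comes from instantiating Theorem~\ref{thm:classification} with $A=I$ and the hinge loss, evaluating $\sup_{\alpha,\, g\in \I(\alpha\|\w^*\|^2)} \alpha g = \|\w^*\|^{-2}$ to get $LB3=\left\lceil \lambda\|\w^*\|^2\right\rceil$. Your added parity remark about the at-most-one gap matches the paper's ``rounding effect'' discussion and is a harmless extra.
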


\begin{proof}
The upper bound directly follows Proposition~\ref{thm:isvm}. We only need to show the lower bound $LB3=\left\lceil \lambda \|\w^*\|^2\right\rceil$ in Theorem~\ref{thm:classification}. Let $A=I$, $\ell(a) = \max(1-a, 0)$, and consider the denominator of~\eqref{eq:thm:classification}:
$$\sup_{\alpha\in\R, g\in -\partial \ell(\alpha \|\w^*\|^2)} \alpha g  = \sup_{\alpha, g\in \I(\alpha \|\w^*\|^2)} \alpha g = {1\over \|\w^*\|^2}$$
where the first equality is due to $\partial \ell(a) = -\I(a)$. Therefore, $LB3=\left\lceil {\lambda \|\w^*\|^2}\right\rceil$ which proves the lower bound.
\end{proof}

\textbf{Inhomogeneous logistic regression} solves the problem
\begin{align}
\min_{\w\in \R^d, b\in \R}\quad \sum_{i=1}^n\log (1+\exp\{-y_i(\x_i^\top \w+b)\}) + {\lambda \over 2}\|\w\|^2
\label{eq:ilogr}
\end{align}

\begin{prop} \label{thm:ilogr}
To create a teaching set for target model $[\w^*; b^*]$ with nonzero $\w^*$ for inhomogeneous logistic regression~\eqref{eq:ilogr}, we can use 
$n = 2\left\lceil {\lambda \|\w^*\|^2 \over {2\tau_{\max}}} \right\rceil$
training items where
$\x_i = \x_+,\quad y_i = 1,\quad \forall i \in \left\{1, \cdots, {n\over 2} \right\}$
and
$\x_i = \x_-,\quad y_i = -1,\quad \forall i \in \left\{{n\over 2}+1, \cdots, n \right\}$.
$\x_+$ and $\x_-$ can be designed as any vectors satisfying 
\begin{equation}
\x_+ ^\top \w^*  =  t - b^* ,
\quad
\x_- =  \x_+ -  \frac{2t}{\|\w^*\|^2}\w^*,
\label{eq:ilogr:construction}
\end{equation}
where the constant $t$ is defined by
$t:= \tau^{-1}\left(\frac{\lambda \|\w^*\|^2}{n}\right)$.
\end{prop}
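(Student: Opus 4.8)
The plan is to follow the two-step template already used for the other learners: first verify that the target $[\w^*;b^*]$ satisfies the first-order optimality (gradient) condition of~\eqref{eq:ilogr} under the proposed construction~\eqref{eq:ilogr:construction}, and then establish uniqueness. Because the logistic objective is smooth and convex, a vanishing gradient suffices to certify a global minimizer (as in the homogeneous case, Proposition~\ref{thm:hlogr}); the new ingredient relative to the homogeneous setting is the unregularized bias $b$, which I must also pin down, exactly the complication encountered in the inhomogeneous SVM analysis (Proposition~\ref{thm:isvm}).

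For the first step I would start from the margins. Writing $m_i := y_i(\x_i^\top\w^* + b^*)$, the construction gives $\x_+^\top\w^* + b^* = t$ and, since $\x_- = \x_+ - \tfrac{2t}{\|\w^*\|^2}\w^*$, also $\x_-^\top\w^* + b^* = -t$; hence every item has the same margin $m_i = t$. Using $\ell'(a) = -1/(1+e^{a})$ for the logistic loss, the bias component of the gradient is $\ell'(t)\sum_i y_i = \ell'(t)(\tfrac n2-\tfrac n2)=0$, vanishing automatically by the balanced $\pm 1$ construction. The $\w$ component collapses, via $\x_+-\x_- = \tfrac{2t}{\|\w^*\|^2}\w^*$, to $\bigl(\ell'(t)\tfrac{nt}{\|\w^*\|^2}+\lambda\bigr)\w^*$, which is ${\bf 0}$ exactly when $\tfrac{t}{1+e^{t}}=\tfrac{\lambda\|\w^*\|^2}{n}$, i.e.\ when $t=\tau^{-1}(\lambda\|\w^*\|^2/n)$---precisely the value prescribed in the proposition. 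I would also note that $n = 2\lceil \lambda\|\w^*\|^2/(2\tau_{\max})\rceil \ge \lambda\|\w^*\|^2/\tau_{\max}$, so $\lambda\|\w^*\|^2/n \le \tau_{\max}$ and $\tau^{-1}$ is well defined here; this parallels the admissibility check in Proposition~\ref{thm:hlogr}.

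The uniqueness step is where the logistic case is in fact easier than the SVM case, and this is the point I would stress. Since $\ell''(a)=e^{-a}/(1+e^{-a})^2>0$ for all $a$, the Hessian of the objective at every $(\w,b)$ is positive definite: for any direction $[\u;c]\neq{\bf 0}$ the associated quadratic form equals $\sum_i \ell''(m_i)(\x_i^\top\u + c)^2 + \lambda\|\u\|^2$. If $\u\neq{\bf 0}$ the regularizer term $\lambda\|\u\|^2$ makes this strictly positive, and if $\u={\bf 0}$ but $c\neq 0$ the loss term $c^2\sum_i\ell''(m_i)>0$ does. Thus the objective is strictly convex on all of $\R^{d+1}$, and the minimizer $[\w^*;b^*]$ is unique. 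The instructive contrast with inhomogeneous SVM is that the hinge loss has $\ell''=0$ on its linear piece, leaving the unregularized $b$ direction with no curvature and necessitating the delicate two-case argument in Proposition~\ref{thm:isvm}; the strict convexity of the logistic loss supplies exactly that missing $b$-direction curvature and removes the difficulty.

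I do not expect a serious obstacle. The only genuinely delicate issue---controlling the unregularized bias direction---is handled for free by $\ell''>0$, and the remaining computations (evaluating the margins and collapsing the gradient) are routine and mirror the homogeneous logistic proof. The one bookkeeping point worth verifying carefully is the admissibility condition $\lambda\|\w^*\|^2/n\le\tau_{\max}$ ensuring $t=\tau^{-1}(\cdot)$ exists, which follows immediately from the ceiling in the definition of $n$.
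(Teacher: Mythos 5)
Your proposal is correct and follows essentially the same route as the paper: verify the zero-gradient condition using the symmetric margins $\pm t$ and the defining property of $\tau^{-1}$, then obtain uniqueness from positive definiteness of the Hessian, where $\lambda\|\u\|^2$ forces $\u={\bf 0}$ and the strictly positive logistic curvature then forces the bias direction $c=0$. Your phrasing of the uniqueness step as strict convexity of the objective at every $(\w,b)$ is a slightly cleaner packaging of the same Hessian argument the paper gives at the target point, and your admissibility check on $\tau^{-1}$ matches the paper's.
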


\begin{proof}
We first point out that for $t$ to be well-defined the argument to $\tau^{-1}()$ has to be bounded $\frac{\lambda \|\w^*\|^2}{n} \le \tau_{\max}$.
This implies $n \ge \frac{\lambda \|\w^*\|^2}{\tau_{\max}}$.
The size of our proposed teaching set is the smallest among all such symmetric construction that satisfy this constraint.

We verify that the KKT condition to show the construction in~\eqref{eq:ilogr:construction} includes the solution $[\w^*; b^*]$. From \eqref{eq:ilogr:construction}, we have
\[
\x_+^\top \w^* + b^* = t\quad \x_-^\top \w^* + b^* = -t.
\]
We apply them and the teaching set construction to compute the gradient of~\eqref{eq:ilogr}:
\begin{align*}
& -{n \over 2}\frac{1}{1+ \exp\{\x_+^\top \w^* + b^*\}}
\left[\begin{matrix}
\x_+ \\
1
\end{matrix}\right]
+
{n \over 2}\frac{1}{1+ \exp\{-\x_-^\top \w^* - b^*\}}
\left[\begin{matrix}
\x_- \\
1
\end{matrix}\right]
+
\left[\begin{matrix}
\lambda \w^* \\
0
\end{matrix}\right]
\\ = & 
-{n \over 2}\frac{1}{1+ \exp\{t\}}
\left[\begin{matrix}
\x_+ \\
1
\end{matrix}\right]
+
{n \over 2}\frac{1}{1+ \exp\{t\}}
\left[\begin{matrix}
\x_- \\
1
\end{matrix}\right]
+
\left[\begin{matrix}
\lambda \w^* \\
0
\end{matrix}\right]
\\ = & 
-{n\over \|\w^*\|^2}\frac{t}{1+ \exp\{t\}}
\left[\begin{matrix}
\w^*
\\
0
\end{matrix}\right]
+
\left[\begin{matrix}
\lambda \w^* \\
0
\end{matrix}\right]
\\ = &
-{n\over \|\w^*\|^2}\frac{\lambda \|\w^*\|^2}{n}
\left[\begin{matrix}
\w^*
\\
0
\end{matrix}\right]
+
\left[\begin{matrix}
\lambda \w^* \\
0
\end{matrix}\right]
\\ = &
{\bf 0}.
\end{align*}
This verifies the KKT condition. 

Finally we show uniqueness. The Hessian matrix of the objective function~\eqref{eq:ilogr} under our training set~\eqref{eq:ilogr:construction} is:
\begin{align*}
\underbrace{{n \over 2} \frac{\exp\{t\}}{(1+\exp \{t\})^2}}_{:=a}
\underbrace{\left[
\begin{matrix}
\x_+ \x_+^\top + \x_- \x_-^\top  & \x_+ + \x_- \\
\x_+^\top + \x_-^\top & 2
\end{matrix}
\right]
}_{{:=A}}
+ \lambda 
\underbrace{
\left[
\begin{matrix}
I & {\bf 0} \\
{\bf 0}^\top & 0
\end{matrix}
\right]
}_{:=B}.
\end{align*}
Note $a>0$ and $A=
\begin{bmatrix} \x_+ \\ 1 \end{bmatrix} \begin{bmatrix} \x_+ & 1 \end{bmatrix}
+
\begin{bmatrix} \x_- \\ 1 \end{bmatrix} \begin{bmatrix} \x_- & 1 \end{bmatrix}
$ is positive semi-definite.
We show that $aA+\lambda B$ is positive definite.
Suppose not.  Then there exists $[{\bf u}; v] \neq \bf 0$ such that 
$[{\bf u}; v]^\top (aA+\lambda B) [{\bf u}; v] = 0$.
This implies 
$[{\bf u}; v]^\top (aA) [{\bf u}; v] + \lambda {\bf u}^\top {\bf u} = 0$.
Since the first term is non-negative due to $A$ being positive semi-definite, ${\bf u} = \bf 0$.
But then we have $2a v^2 = 0$ which implies $[{\bf u}; v] = \bf 0$, a contradiction.
Therefore uniqueness is guaranteed. 

\end{proof}

\begin{cor}
The teaching dimension 
for inhomogeneous logistic regression and target $\btheta^* = [\w^*; b^*]$ where $\w^*\neq \bf 0$ is in the interval 
$\left\lceil {\lambda\|\w^*\|^2 \over \tau_{\max}} \right\rceil \le
TD(\btheta^*, \A_{log}^{inh})
\le 2\left\lceil {\lambda\|\w^*\|^2 \over {2 \tau_{\max} }} \right\rceil$.
\end{cor}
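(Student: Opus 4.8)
The plan is to sandwich $TD(\btheta^*, \A_{log}^{inh})$ between a lower and an upper bound that match up to rounding, exactly mirroring the inhomogeneous SVM corollary that precedes it. The upper bound is immediate: Proposition~\ref{thm:ilogr} exhibits an explicit teaching set of size $n = 2\left\lceil \frac{\lambda \|\w^*\|^2}{2\tau_{\max}} \right\rceil$, and by definition the size of any teaching set upper-bounds the teaching dimension. So the only substantive work is to establish the lower bound $LB3 = \left\lceil \frac{\lambda \|\w^*\|^2}{\tau_{\max}} \right\rceil$ using Theorem~\ref{thm:classification}, whose hypotheses ($A=I$ has full rank, $\w^* \neq \bf 0$) are satisfied here.

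To obtain $LB3$, I would instantiate \eqref{eq:thm:classification} with $A = I$ and the logistic loss $\ell(a) = \log(1+\exp\{-a\})$, then evaluate the denominator $\sup_{\alpha \in \R,\, g \in -\partial \ell(\alpha \|\w^*\|^2)} \alpha g$. Since the logistic loss is smooth, its subdifferential is a singleton: $\ell'(a) = -\frac{1}{1+e^a}$, so $-\partial \ell(\alpha \|\w^*\|^2) = \frac{1}{1+\exp\{\alpha \|\w^*\|^2\}}$. The supremum therefore reduces to $\sup_\alpha \frac{\alpha}{1+\exp\{\alpha \|\w^*\|^2\}}$.

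The key step is to recognize this supremum as the constant $\tau_{\max}$. Substituting $t = \alpha \|\w^*\|^2$ factors out $\|\w^*\|^{-2}$ and leaves exactly $\sup_t \frac{t}{1+e^t} = \tau_{\max}$, which is precisely the definition of $\tau_{\max}$ given in the text. Hence the denominator equals $\frac{\tau_{\max}}{\|\w^*\|^2}$, yielding $LB3 = \left\lceil \frac{\lambda \|\w^*\|^2}{\tau_{\max}} \right\rceil$. This computation is identical to the one already carried out in the proof of Corollary~\ref{cor:hlog} for the homogeneous case, so I anticipate no genuine obstacle; the only remaining subtlety is the harmless rounding gap between $\left\lceil \frac{\lambda \|\w^*\|^2}{\tau_{\max}} \right\rceil$ and $2\left\lceil \frac{\lambda \|\w^*\|^2}{2\tau_{\max}} \right\rceil$, which is exactly why the statement is phrased as an interval rather than an exact value.
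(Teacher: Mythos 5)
Your proposal is correct and follows essentially the same route as the paper: the upper bound is read off from Proposition~\ref{thm:ilogr}, and the lower bound is obtained by instantiating LB3 of Theorem~\ref{thm:classification} with $A=I$ and $\ell(a)=\log(1+\exp\{-a\})$, reducing the denominator to $\|\w^*\|^{-2}\sup_t \frac{t}{1+e^t} = \tau_{\max}/\|\w^*\|^2$ exactly as in the paper. The derivative computation $-\ell'(a) = (1+e^a)^{-1}$ and the substitution $t=\alpha\|\w^*\|^2$ are both right, so there is nothing to add.
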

\begin{proof}
The upper bound directly follows Proposition~\ref{thm:ilogr}. We only need to show the lower bound $\left\lceil {\lambda \|\w^*\|^2 \over \tau_{\max}}\right\rceil$ by applying $LB3$ in Theorem~\ref{thm:classification}. Let $A=I$ and $\ell(a)=\log(1+\exp\{-a\})$ and consider the denominator of~\eqref{eq:thm:classification}:
\begin{align*}
\sup_{\alpha\in\R, g\in \partial \ell(-\alpha \|\w^*\|^2)} \alpha g  = & 
\sup_{\alpha, g= (1+\exp\{\alpha \|\w^*\|^2\})^{-1}} \alpha g
\\ = &
\sup_{\alpha} \frac{\alpha}{1+\exp\{\alpha \|\w^*\|^2\}}
\\ = &
\|\w^*\|^{-2}\sup_{t} \frac{t}{1+\exp\{t\}}
\\ = &
\frac{\tau_{\max}}{\|\w^*\|^2},
\end{align*}
which implies $LB3 = \left\lceil \frac{\lambda \|\w^*\|^2}{\tau_{\max}} \right\rceil$.
\end{proof}

\section{Teaching a Decision Boundary Instead of a Parameter} 
\label{sec:Gdb}
In section~\ref{sec:main} we considered the teaching goal where the learner is required to learn the exact \emph{target parameter} $\btheta^*$.
But when the learner is a classifier often a weaker teaching goal is sufficient, namely teaching the learner a \emph{target decision boundary}. 
In this section we consider this teaching goal.
Equivalently, such a goal is defined by the set of parameters that produce the target decision boundary.
Teaching is successful if the learner arrives at any one parameter within that set.

In the case of inhomogeneous linear learners, the linear decision boundary 
$\{\x \mid \x^\top \w^* + b^* = 0\}$
is identified with the parameter set
$\{t[\w^*; b^*]: t>0\}$.  Here we assume $\w^*$ is nonzero.
The parameter $\btheta^* = [\w^*; b^*]$ is just a representative member of the set.
Homogeneous linear learners are similar without $b^*$.
We denote the corresponding ``decision boundary'' teaching dimension by $TD(\{ t \btheta^*\},  \A_{opt})$. 
This notation extends our earlier definition of TD by allowing the first argument to be a set, with the understanding that the teaching goal is for the learned model to be an element in the set.
It immediately follows that 
\begin{equation}
TD(\{ t \btheta^*\},  \A_{opt}) = \min_{t>0} TD(t \btheta^*,  \A_{opt}). 
\end{equation}
Since it is sufficient to teach the parameter $t\btheta^*$ for some $t>0$ in order to teach the decision boundary,
we can choose the best $t$ that minimizes $TD( t \btheta^*,  \A_{opt})$.
For SVM and logistic regression -- either homogeneous or inhomogeneous -- the teaching dimension $TD(t \btheta^*,  \A_{opt})$ depends on $\|t\btheta^*\|$ (see Table~\ref{tab:main}).
We can choose $t$ sufficiently small to drive down the teaching set size toward its minimum (which is nonzero because of the ceiling function).  
Specifically, for any fixed parameter $\btheta^*$ representing the target decision boundary:
\begin{itemize}
\item (homogeneous SVM): we can choose $t \le \frac{1}{ \sqrt{\lambda} \|\btheta^*\|}$ so that $TD(\{ t \btheta^*\},  \A^{hom}_{svm})=1$;
\item (homogeneous logistic regression): we can choose $t \le \frac{\sqrt{\tau_{\max}}}{ \sqrt{\lambda} \|\btheta^*\|}$ so that $TD(\{ t \btheta^*\},  \A^{hom}_{log})=1$;
\item (inhomogeneous SVM): we can choose $t \le \frac{\sqrt{2}}{ \sqrt{\lambda} \|\w^*\|}$ so that $TD(\{ t \btheta^*\},  \A^{inh}_{svm})=2$;
\item (inhomogeneous logistic regression): we can choose $t \le \frac{\sqrt{2\tau_{\max}}}{ \sqrt{\lambda} \|\w^*\|}$ so that $TD(\{ t \btheta^*\},  \A^{inh}_{log})=2$.
\end{itemize}
The resulting teaching dimension $TD(\{ t \btheta^*\},  \A_{opt})$ is listed in Table~\ref{tab:main} on the row marked by ``decision boundary.''
The teaching set construction is the same as in sections~\ref{sec:UBhomogeneous} and~\ref{sec:UBinhomogeneous}, respectively, but with $t\btheta^*$.

\section{Related Work}
\label{sec:relatedwork}

Teaching dimension as a learning-theoretic quantity has attracted a long history of research.
It was proposed independently in~\cite{Goldman1995Complexity,Shinohara1991Teachability}. 
Subsequent theoretical developments can be found in e.g.~\cite{Zilles2011Models,Balbach2009Recent,982362,conf/colt/AngluinK97,Goldman1996Teaching,DBLP:journals/jcss/Mathias97,Balbach2006Teaching,Balbach:2008:MTU:1365093.1365255,Kobayashi2009Complexity,journals/ml/AngluinK03,conf/colt/RivestY95,journals/ml/Ben-DavidE98,JMLR:v15:doliwa14a}.
Most of them assume little extra knowledge on the learner other than that it is consistent with the training data.
While such version-space learners are elegant object of theoretical study,
they diverge from the practice of modern machine learning.
Our present paper is among the first to extend teaching dimension to optimization-based machine learners.

Teaching dimension is distinct from VC dimension.
For a finite hypothesis space $\H$,
Goldman and Kearns~\cite{Goldman1995Complexity} proved the relation
\begin{equation}
VC(\H) / \log(|\H|) \le TD(\H) \le VC(\H) + |\H| - 2^{VC(\H)}. 
\end{equation}
These inequalities are somewhat weak, as Goldman and Kearns had shown both cases where one quantity is much larger than the other.
The distinction between TD and VC dimension is also present in our setting.
For example, by inspecting the inhomogeneous SVM column in Table~\ref{tab:main} we note that TD does not depend on the dimensionality $d$ of the feature space $\R^d$.
To see why this makes intuitive sense, note two $d$-dimensional points are sufficient to specify any bisecting hyperplane in $\R^d$.
On the other hand, recall that the VC dimension for inhomogeneous hyperplanes in $\R^d$ is $d+1$.
Further quantification of the relation between TD and VC (and other capacity measures) remains an open research question.

The teaching setting we considered is also distinct from active learning.
In teaching the teacher knows the target model \emph{a priori} and her goal is to \emph{encode} the target model as a training set, 
knowing that the decoder is special (namely a specific machine learning algorithm). 
This communication perspective highlights the difference to active learning, which must explore the hypothesis space to find the target model.
 Consequently, the teaching dimension can be dramatically smaller than the active learning query complexity for the same learner and hypothesis space.
For example, Zhu~\cite{Zhu2013Machine} demonstrated that to learn a 1D threshold classifier within $\epsilon$ error, the teaching dimension is a constant TD=2 regardless of $\epsilon$, while active learning would require $O(\log{1 \over \epsilon})$ queries which can be arbitrarily larger than TD.

While the present paper focused on the theory of optimal teaching, there are practical applications, too.
One such application is computer-aided personalized education.
The human student is modeled by a computational cognitive model, or equivalently the learning algorithm.
The educational goal is encoded in the target model. 
The optimal teaching set is then well-defined, and represents the best personalized lesson for the student~\cite{Zhu2015Machine,Zhu2013Machine,Khan2011How}.
Patil \textit{et al.} showed that human students learn statistically significantly better under such optimal teaching set compared to an $i.i.d.$ training set~\cite{Patil2014Optimal}.
Because contemporary cognitive models often employ optimization-based machine learners,
our teaching dimension study helps to characterize these optimal lessons.

Another application of optimal teaching is in computer security.
In particular, optimal teaching is the mathematical formalism to study the so-called data poisoning attacks~\cite{Barreon-Nelson-MLJ-2010,Mei2015Machine,Mei2015Security,Alfeld2016Data}.
Here the ``teacher'' is an attacker who has a nefarious target model in mind.  
The ``student'' is a learning agent (such as a spam filter) which accepts data and adapts itself.
The attacker wants to minimally manipulate the input data in order to manipulate the learning agent toward the attacker's target model.
Teaching dimension quantifies the difficulty of data-poisoning attacks, and enables research on defenses.


Teaching dimension also has applications in interactive machine learning to quantify the minimum human interaction necessary~\cite{Cakmak2011Mixed}, and in formal synthesis to generate computer programs satisfying a specification~\cite{DBLP:journals/corr/JhaS15}.

\section{Conclusion}

We have presented a generalization on teaching dimension to optimization-based learners.
To the best of our knowledge, our teaching dimension for ridge regression, SVM, and logistic regression is new;
so are the lower bounds and our analysis technique in general.

There are many possible extensions to the present work.
For example, one may extend our analysis to nonlinear learners. This can potentially be achieved by using the kernel trick on the linear learners.
As another example, one may allow ``approximate teaching'' by relaxing the teaching goal, such that teaching is considered successful if the learner arrives at a model close enough to the target model.
Taken together, the present paper and its extensions are expected to enrich our understanding of optimal teaching and enable novel applications.

{
\bibliography{reference}
\bibliographystyle{abbrv}
}

\end{document}